\documentclass{article} 
\usepackage{arxiv_conference,times}

\usepackage{amsmath,amsfonts,bm}

\def\eqref#1{equation~\ref{#1}}

\def\1{\bm{1}}

\def\vmu{{\bm{\mu}}}
\def\vtheta{{\bm{\theta}}}

\def\vg{{\bm{g}}}

\def\vu{{\bm{u}}}
\def\vv{{\bm{v}}}

\def\mH{{\bm{H}}}

\def\mP{{\bm{P}}}

\DeclareMathAlphabet{\mathsfit}{\encodingdefault}{\sfdefault}{m}{sl}
\SetMathAlphabet{\mathsfit}{bold}{\encodingdefault}{\sfdefault}{bx}{n}

\def\gB{{\mathcal{B}}}

\def\gD{{\mathcal{D}}}

\def\gP{{\mathcal{P}}}

\newcommand{\E}{\mathbb{E}}

\usepackage{graphicx}
\usepackage[T1]{fontenc}
\usepackage[utf8]{inputenc}
\usepackage{booktabs}
\usepackage{amsthm}
\usepackage{mathrsfs}
\usepackage{subcaption}
\usepackage{wrapfig}
\usepackage{cancel}

\usepackage{hyperref}
\usepackage{url}
\usepackage{xcolor}
\definecolor{mydeepblue}{RGB}{26,64,149}
\definecolor{mygreen}{RGB}{37,145,60}
\definecolor{myorange}{RGB}{217,113,26}
\definecolor{mypurple}{RGB}{117,60,204}
\hypersetup{
    colorlinks=true,
    linkcolor=myorange,
    citecolor=mygreen,
    urlcolor=myorange,
    filecolor=myorange
}

\title{A Journey to the Edge of Stability}

\author{Jaerin Lee \& Kyoung Mu Lee\\
Computer Vision Lab, ASRI\\
Seoul National University\\
Seoul 08826, Korea \\
\texttt{\{ironjr,kyoungmu\}@snu.ac.kr}
}

\newtheorem{theorem}{Theorem}[section]

\newtheorem{proposition}[theorem]{Proposition}
\newtheorem{corollary}[theorem]{Corollary}

\iclrfinalcopy 
\begin{document}

\maketitle

\begin{abstract}
It has recently been found that deep learning often occurs at the ``edge of stability (EoS),'' where the maximum Hessian eigenvalue of the model is stabilized at a value reciprocal to the learning rate.
However, what happens \emph{before} we reach that regime?
We fix a deep learning problem and vary first order optimization methods with dense learning rate sweeps.
We then track the characterizing quantities of a learning trajectory: the loss, the sharpness, and the alignment between consecutive gradients.
To our surprise, if we scale the learning rate by the dc gain of the optimizer, these traces from the sweeps from different optimizers almost perfectly overlap across a large range of learning rates.
The dc-normalized optimizers have another role that only becomes apparent in high learning rates: they select when the sharpness value detaches from this universal curve and enters the edge of stability.
Upon this discovery, we specify three distinct regimes with respect to the dc-adjusted learning rate: the \emph{low-LR regime} where the trajectory is nearly insensitive to the optimizer, the \emph{high-LR regime}, where the optimizer governs the sharpness according to the EoS reciprocal rule, and the in-between \emph{mid-LR regime} where so-called \emph{progressive sharpening} originates independently of the optimizer.
This distinguishes the role of the optimizer, the learning rate, and the model in shaping the learning progress.

\end{abstract}

\section{Introduction}
\label{sec:introduction}
Modern deep learning is mainly driven by so-called first order methods, i.e., the (stochastic) gradient descent and its variants.
Under gradient descent, the parameter $\vtheta$ moves according to a scaled gradient $\vtheta \leftarrow \vtheta - \eta \nabla_{\vtheta} L(\vtheta)$, which can be viewed as a discretization of a continuous-time gradient flow $\dot \vtheta = - \nabla_{\vtheta} L(\vtheta)$ with a sampling period $\eta$.
However, it is recently observed that the difference between the two dynamics—the high-order terms, or the \emph{aliasing} terms if we want to take the signal processing perspective—becomes significant in practical settings where the learning rate $\eta$ operates near its maximally stable value.
This aliasing effect develops into a unique characteristic feature of the discrete-time gradient descent, called the \emph{edge of stability} (EoS)~\citep{cohen2021gradient}.
For deterministic full-batch gradient descent, the maximum sharpness $S := \lambda_{\max} (\nabla_{\vtheta}^2 L(\vtheta))$ of the loss function $L(\vtheta)$ is often attracted and stabilized at the value 
$S \approx 2/\eta$,
under which the gradient descent is certain to produce monotonic loss decrease if the loss is quadratic~\citep{ortega1970iterative,bertsekas1999nonlinear}.
Hence the name \emph{edge of stability}.

Similar phenomenon has been observed for preconditioned and momentum-augmented gradient descent~\citep{cohen2021gradient,cohen2023adaptive}.
\citet{andreyev2024edge,andreyev2026momentum} have further generalized this phenomenon to more practical mini-batch settings, incorporating stochastic gradient descent (SGD) and SGD with momentum~\citep{polyak1964some,nesterov1983method}.
Later, \citet{lee2026road} have added a missing term in the relationship $S \approx 2/\eta$ by treating the optimizer as a gradient filter to define its transfer function $Q(z)$, dc gain $Q(1)$, and root locus escape gain $\Gamma(Q)$:
\begin{equation}\tag{{\color{blue!50}$\spadesuit$}}
\label{eq:edge_of_stability_lee}
\renewcommand{\boxed}[1]{\colorbox{lightgray!15}{\ensuremath{#1}}}
\boxed{\ %
\displaystyle
S \approx \frac{2}{\eta Q(1) \Gamma(Q)}.
}
\end{equation}
Overall, it is now well-established that there are essentially discrete-time behaviors of the first order methods in deep learning, summarized in the name: \emph{edge of stability} (EoS)~\citep{cohen2021gradient}.
But how does this phenomenon emerge from the continuous-time gradient flow?
What happens before the edge of stability?
What are the roles of the optimizer, the learning rate, and the model in producing this phenomenon?
When, how, and how much is each of them reponsible in steering the learning progress?
Our goal is to answer these questions by inspecting the learning progress of a \emph{single} deep learning problem solved with various optimizers and learning rates.

We focus on a single large-batch stochastic scenario of learning a CIFAR-10 dataset~\citep{krizhevsky2009learning}.
Inspired by previous work~\citep{andreyev2026momentum,lee2026road}, we devise various optimizers, including Heavy Ball (SGDM)~\citep{polyak1964some}, Nesterov's momentum (SGDN)~\citep{nesterov1983method}, two-pole cascade optimizers~\citep{lee2024grokfast}, and quasi-hyperbolic momentum (QHM) optimizers~\citep{ma2018quasi}, to vary the optimizer's transfer function $Q(z)$, the gains $Q(1)$ and $\Gamma(Q)$, and the unit circle escape angle in the root locus plot.
For each of the optimizer, we densely sweep the raw learning rate $\eta$ in the wide range from $10^{-7}$ to $10^{0}$ in log scale.
The results are beyond our expectations.
We observe three distinct regimes separated in the \emph{effective learning rate} $h := \eta Q(1)$, each of which is characterized by a unique nonlinear relationship between the learning rate $\eta$, the stabilized sharpness $S$, the loss $L$, and the optimizer's transfer function $Q(z)$.
We dub these regimes as the \emph{low-LR softening regime}, \emph{mid-LR sharpening regime}, and \emph{high-LR edge regime}, respectively.
In the follwing sections, we will elaborate on the characteristics of the learning dynamics in each of these regimes in detail.
In specific, our contributions are:

\paragraph{Separable roles of problem geometry and optimizer.}
Training with a single deep learning problem across a broad family of optimizers reveals a remarkably consistent pre-edge traces of the loss $L$ and the sharpness $S$.
The effect of changing an optimizer only takes place near and on the edge of stability.
The problem geometry and the optimizer memory play time-scale-separable roles.

\paragraph{Mechanism of optimizers determining the edge of stability.}
We show that the relationship of type~(\ref{eq:edge_of_stability_lee}) universaly occurs in diverse stochastic first order methods, including multi-pole~\citep{lee2024grokfast} and pole-zero~\citep{ma2018quasi} optimizers.
To our knowledge, this is the first empirical evidence that the optimizer root locus determines the edge of stability in stochastic settings.

\paragraph{Rethinking optimizers as gain-shape compounds.}
The Low-LR traces of the model geometry are insensitive to the optimizer, coinciding in the \emph{effective} lr $h := \eta Q(1)$ coordinate.
This suggests that an optimizer $Q$ is a compound of separable mechanistic parts.
Its dc gain $Q(1)$ is more sensible to be absorbed into lr, together governing the universal trajectories in the Low-LR regimes.
Whereas, the temporal reshaping kernel $\bar Q(z) := Q(z)/Q(1)$ determines the edge of stability.

\paragraph{Mechanism of progressive sharpening.}
We empirically show what determines the mysterious mechanism of \emph{progressive sharpening} reported in the EoS literature~\citep{cohen2021gradient,damian2023self}.
The manifestation of this phenomenon is determined by the model geometry, independently of the optimizer. However, its realization depends on the effective lr $h = \eta Q(1)$.

\section{Observations}
\label{sec:preliminary}
\paragraph{Problem formulation.}
A \emph{deep learning problem} is a tuple
\begin{equation}
\label{eq:problem_formulation}
\mathcal{P} = (\gD, \gP_b, \vtheta, L(\vtheta), \vtheta_0, \eta, Q),
\end{equation}
where $\xi \in \gD$ is the dataset, $\{\xi_i\}_{i=1}^b \sim \gP_b$ is the sampling distribution of size $b$, $\vtheta$ is the parameter with a topology, $L(\cdot; \vtheta)$ is the model-loss compound, i.e., the manifestation of $\vtheta$ on the scalar objective in response to a subset of $\gD$, $\vtheta_0$ is the initialization, $\eta$ is the learning rate scalar, and $Q$ is the optimizer \emph{filter}~\citep{lee2024grokfast}.
We write the empirical risk $L(\vtheta) := \E_{\gB \sim \gP_b} [L(\gB; \vtheta)]$ and its gradient $\vg(\vtheta) := \nabla_{\vtheta} L(\vtheta)$ and Hessian $\mH(\vtheta) := \nabla^2_{\vtheta} L(\vtheta)$.
The sharpness is defined as $S(\vtheta) := \lambda_{\max}(\mH(\vtheta))$.
Optimizer preconditioners~\citep{duchi2011adaptive,tieleman2012lecture,kingma2015adam} separate sharpnesses into preconditioned and unpreconditioned variants, complicating the analysis.
We treat the effect of preconditioning out of scope of this work, and focus on the memory structure of the optimizer $Q$.

An optimizer $Q$ is an optionally stateful causal linear filter on a stochastic gradient stream $\vg_{\le t}$.
Filter means that its operation summarizes input sequences through a linear convolution with its impulse response $Q_t$.
That is, $Q * \vg_{\le t} = \sum_{j=0}^{\infty} Q_j \vg_{t-j}$.
Under a first order method, we update $\vtheta$ as
\begin{equation}
\label{eq:gradient_filter}
\vtheta_{t+1} \;=\; \vtheta_t - \eta (Q * \vg_{\le t})_t \;.
\end{equation}
This class includes widely used SGD~\citep{robbins1951stochastic} and the two most famous variants of momentum: Heavy Ball~\citep{polyak1964some} and Nesterov~\citep{nesterov1983method}.
To study optimizers with an emphasis on their signal filtering mechanism, we additionally consider the following two classes of optimizers: dual-momentum cascade of Grokfast~\citep{lee2024grokfast}, and quasi-hyperbolic momentum (QHM)~\citep{ma2018quasi}.
These two additional classes correspond to well-used linear filters: two-pole and pole-zero filters, respectively.
Treating optimizers as filters lets us study them in the frequency dual domain, i.e., $z$-domain: $Q(z) = \sum_{t=0}^{\infty} Q_t z^{-t}$.
We write $Q(1)$ for the dc gain of this filter.
The \emph{effective (dc-matched) learning rate} is defined as $h := \eta Q(1)$.

\paragraph{Optimizer root locus.}
Treating gradient-based optimizers as filters unlocks a powerful tool to study its behavior: a root locus~\citep{ogata1995discrete}.
Following \citet{lee2026road}, we consider a dynamics along an eigenpair $(\lambda, \vv)$ of $\mH$.
Let the parameter $\vtheta$ in this coordinate be $x := \vv^\top (\vtheta - \vtheta_\star)$, where $\vtheta_\star$ is the local minimum along the direction $\vv$.
Following the derivation in \citet{lee2026road}, we get the characteristic equation of the transfer function $Q(z)$
\begin{equation}
\label{eq:eigencomponent_transfer_function}
z - 1 + \eta \lambda Q(z) \;=\; 0.
\end{equation}
We further define a dc-normalized transfer function $\bar Q(z) := Q(z) / Q(1)$ to separate gain control from the filter shape~\citep{lee2025greedy}.
Focusing on the eigenpair $(S, \vv_\text{max})$ of the maximum eigenvalue (sharpness) $S = \lambda_{\max}(\mH)$, let $k := hS$.
Then the characteristic equation becomes
\begin{equation}\tag{{\color{red!50}$\clubsuit$}}
\label{eq:eigencomponent_transfer_function_normalized}
\renewcommand{\boxed}[1]{\colorbox{lightgray!15}{\ensuremath{#1}}}
\boxed{\ %
z - 1 + k \bar Q(z) \;=\; 0.
}
\end{equation}
This equation governs the relationship between the effective learning rate $h$ (gain), the sharpness $S$, and the filter shape $\bar Q(z)$.
In practical scenarios, we first define the filter shape $\bar Q(z)$, i.e., the optimizer family, such as SGD or SGDM, and then tune the gain $h = \eta Q(1)$.

\paragraph{Experimental setup.}
Training dynamics are determined by multitudes of components.
Our main focus is on the joint effect of the optimizer $Q$, the learning rate $\eta$, and the geometry of the loss landscape $L(\vtheta)$.
Therefore, the remaining components are fixed throughout the analysis.
We fix the task to CIFAR-10~\citep{krizhevsky2009learning} and consider three representative architectures: a two-layer tanh MLP having our special attention, and a four-layer tanh CNN and a small three block ViT~\citep{dosovitskiy2021image}.
The above number of layers did not count the common linear classification head.
We fix the batch size to 4096 for all experiments, which is in the \emph{large batch regime} specified by \citet{andreyev2026momentum}.
A diverse set of optimizers are considered in this study, classified into pole-zero structures in their transfer function of $Q(z)$: SGD, SGDM, SGDN, QHM, and two-pole cascade (Grokfast).
In total of 10 optimizers are considered.
Using each of these filters, we train the same model from a same initialization point.
This is to fix the initial geometry and study the trajectory deviation caused by the optimizer.
We vary the learning rate over a wide range, roughly equally spaced on a log scale: $\{1.0, 1.4, 2.0, 3.0, 5.0, 7.0\} \times 10^{\{-7,-6,\ldots,0\}}$.

\paragraph{Measurements.}
Throughout 200k iterations, we collect the sharpness $S$, the loss $L$, and two types of flow smoothness measures:
a \emph{consecutive gradient alignment} $c_t$ as the cosine similarity, and a \emph{first order alias} $\rho_t$ as the normalized difference of the consecutive gradient pair:
\begin{equation}
\label{eq:consecutive_gradient_alignment}
c_t \;:=\; \frac{\langle \vg_t, \vg_{t+1} \rangle}{\|\vg_t\| \|\vg_{t+1}\|} \;\in\; [-1, 1], \qquad
\rho_t \;:=\; \left| \frac{1}{2} - \frac{Q'(1)}{Q(1)} \right| \frac{\|\vg_{t+1} - \vg_{t}\|}{\|\vg_t\|} \;\in\; [0, \infty),
\end{equation}
where time $\tau := ht = t \eta Q(1)$ is defined as the product of the effective learning rate $h$, i.e., the sampling period, and the algorithm iteration index $t$.
This gets the governing gradient flow normalized as $\mathrm d\vtheta / \mathrm d\tau = -\vg$.
The alignment $c_t$ measures the effect of Hessian-induced gradient rotation.
In Proposition~\ref{prop:first_order_alias}, the alias $\rho_t$ is shown to be equivalent to the relative contribution of the higher order terms in the Taylor expansion up to $O(h^2)$.
Among these, the sharpness $\lambda_{\max}(\mH_t)$ may oscillate wildly, especially at the edge of stability.
However, taking its median over a fixed window size of 51 samples greatly stabilizes the value to let us study its behavior more clearly.
We use this scalar \emph{stabilized sharpness} as the representative curvature signature.

\begin{figure}[t]
\centering
\begin{minipage}[t]{0.6425\linewidth}
    \vspace{0pt}
    \begin{subfigure}[t]{\textwidth}
        \centering
        \includegraphics[width=\linewidth]{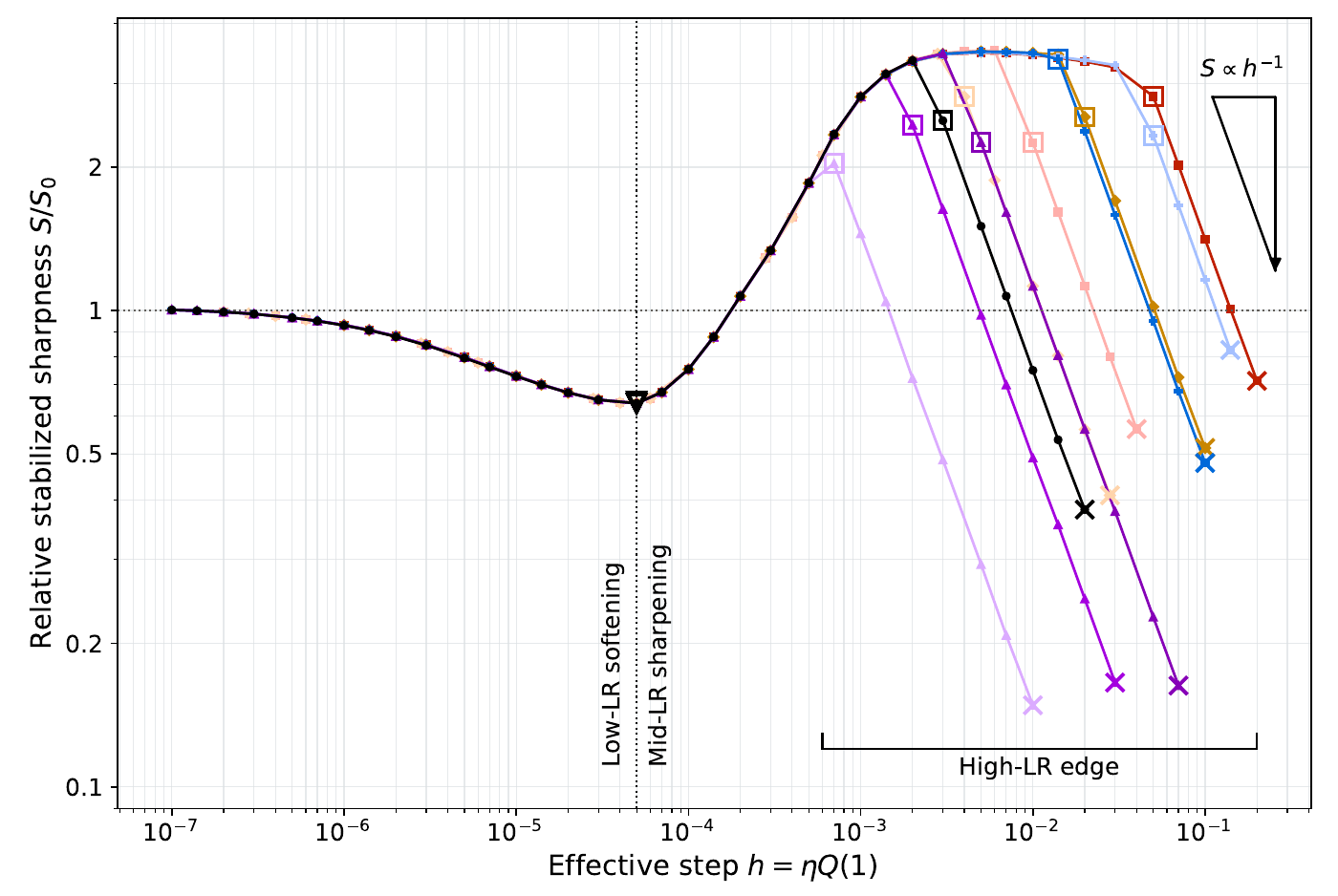}
        \caption{\small Stabilized sharpness $S$ vs. effective lr $h = \eta Q(1)$.}
        \vspace{3pt}
    \end{subfigure}
    \begin{subfigure}[t]{\textwidth}
        \centering
        \includegraphics[width=\linewidth]{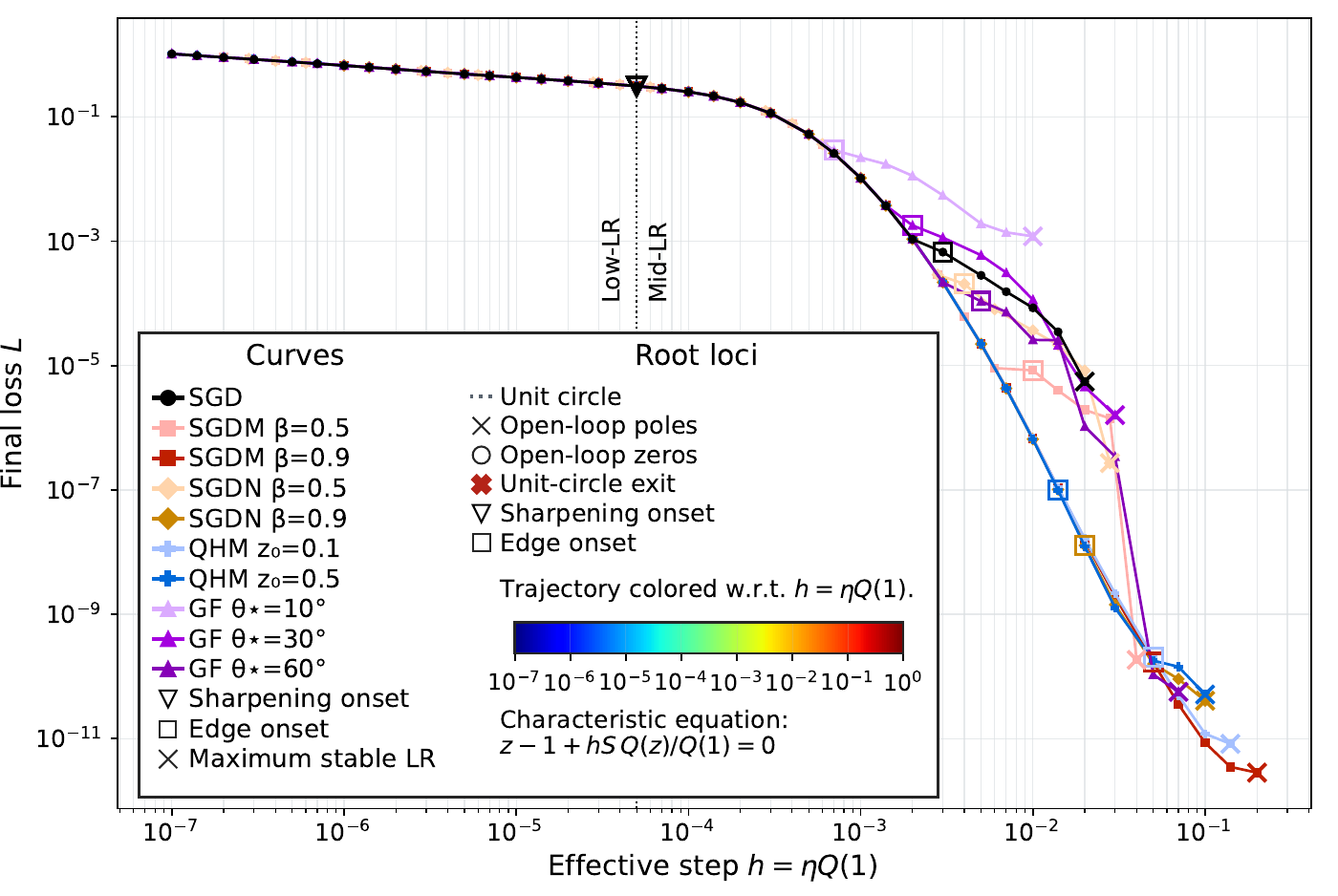}
        \caption{\small Final loss $L$ vs. effective lr $h = \eta Q(1)$.}
    \end{subfigure}
\end{minipage}%
\hfill
\begin{minipage}[t]{0.3425\linewidth}
    \vspace{0pt}
    \begin{subfigure}[t]{\textwidth}
        \centering
        \includegraphics[width=\linewidth]{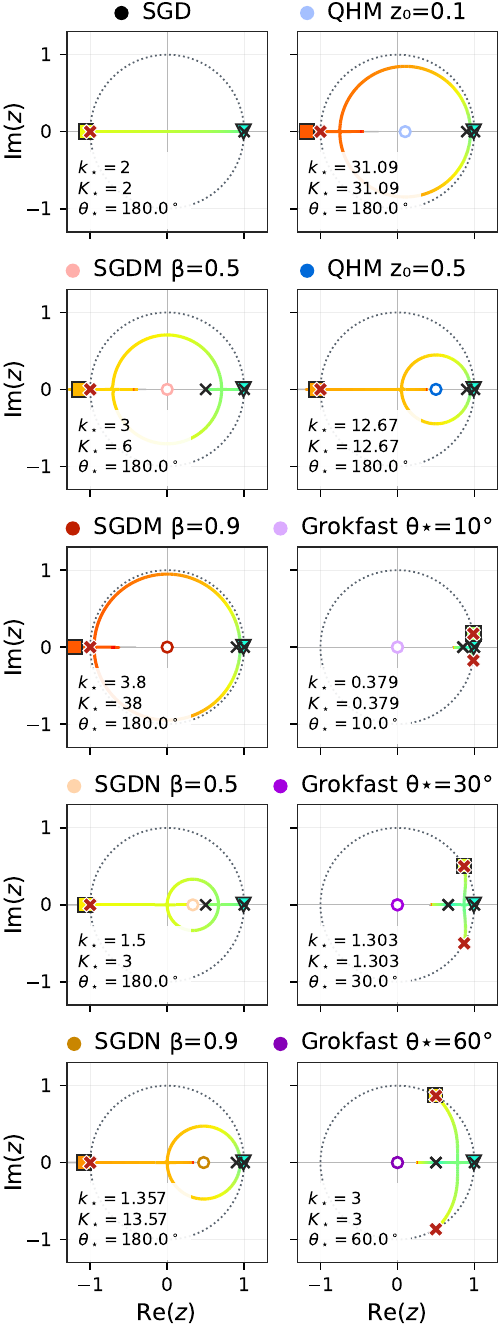}
        \caption{\small Full root locus plots.}
    \end{subfigure}
\end{minipage}
\vspace{0pt}
\begin{minipage}{0.4975\linewidth}
    \vspace{0pt}
    \begin{subfigure}[t]{\textwidth}
        \centering
        \includegraphics[width=\linewidth]{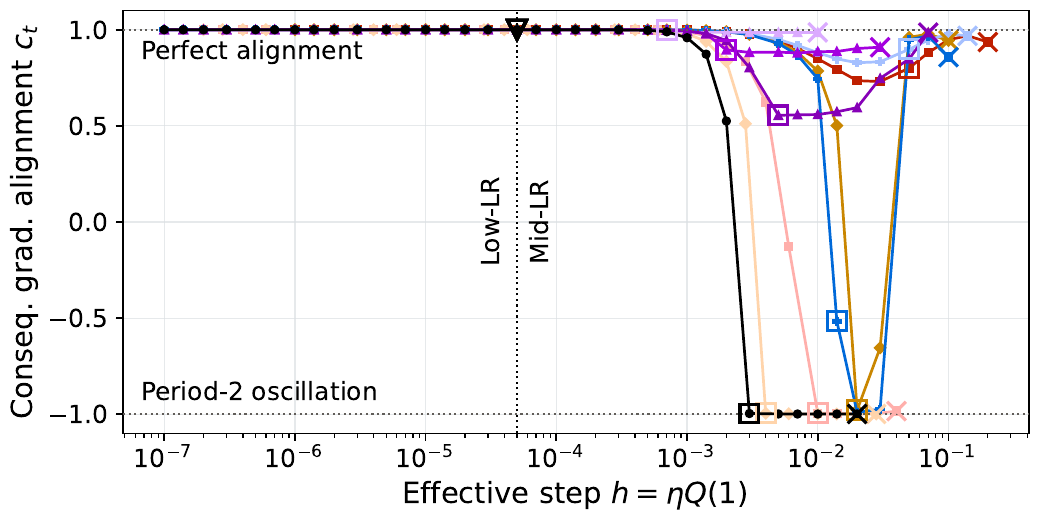}
        \caption{\small Consecutive gradients alignment $c_t$.}
        \vspace{3pt}
    \end{subfigure}
\end{minipage}%
\hfill
\begin{minipage}{0.4975\linewidth}
    \vspace{0pt}
    \begin{subfigure}[t]{\textwidth}
        \centering
        \includegraphics[width=\linewidth]{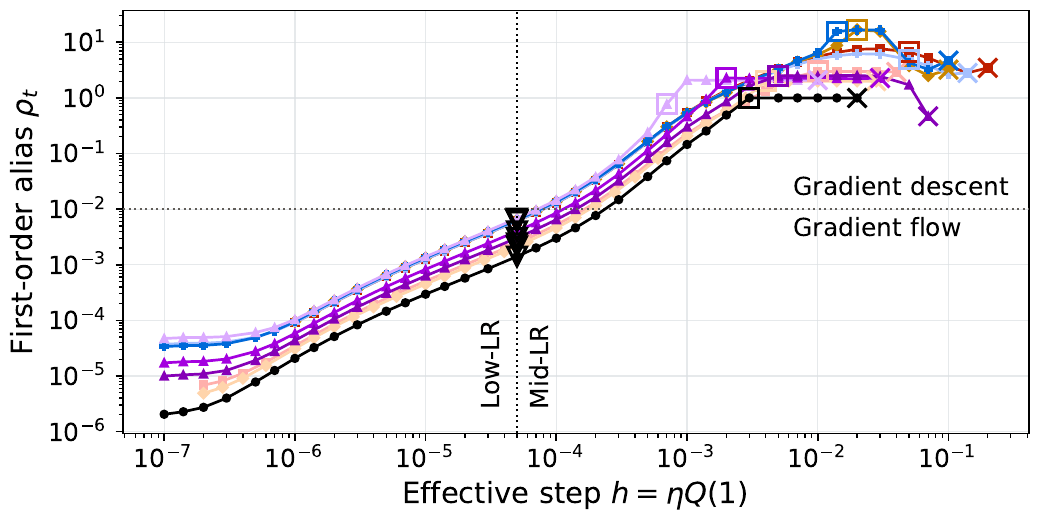}
        \caption{\small First order alias $\rho_t$ of GD vs. GF.}
        \vspace{3pt}
    \end{subfigure}
\end{minipage}
\vspace{-1em}
\caption{%
\small
\textbf{Main results.}
All results drawn in the same figure to visualize coincidence.
Each line connects a learning rate sweep over a single optimizer.
In the effective lr $h$ coordinate, the operating range of this learning system is naturally partitioned into three distinct regimes.
In the low and mid-LR regimes, the results coincide in an optimizer-independent curves.
In the edge regime, the optimizer plays a role in determining the edge of stability, marked by the strict oblique lines $S \propto 1/h$ in Figure~\ref{fig:main_results}a.
Consecutive gradient alignment $c_t$ and first order alias $\rho_t$ marks the natural transition boundaries between these three regimes, marked by the triangle (between low and mid-LR) and square (between mid-LR and edge) markers.
}
\label{fig:main_results}
\end{figure}

\paragraph{Main results.}
We now summarize our main findings in Figure~\ref{fig:main_results} and~\ref{fig:geometry_response_of_cnn_and_vit}.
Figures~\ref{fig:main_results}a and \ref{fig:main_results}b plot two geometric quantities, the stabilized sharpness $S$, and the final loss $L$, respectively, with respect to the aligned x-axis of effective learning rate $h = \eta Q(1)$ in logarithmic scale.
Each dot summarizes a full 200k training steps using a single optimizer and a learning rate, and the runs with the same optimizer with different learning rates are connected by a line, indicating a sweep.
Figure~\ref{fig:main_results}c shows the full root locus plots for 10 optimizers with marked regimes of low-LR, mid-LR, and edge.
Figures~\ref{fig:main_results}d and \ref{fig:main_results}e plot auxiliary smoothness measures $c_t$ and $\rho_t$ that help distinguish the three regimes.
Likewise in Figures~\ref{fig:main_results}a and \ref{fig:main_results}b, sharpness and final loss with respect to the effective lr $h = \eta Q(1)$ are plotted for CNN and ViT models in Figure~\ref{fig:geometry_response_of_cnn_and_vit}.

\begin{figure}[t]
\centering
\begin{minipage}{0.4975\linewidth}
    \vspace{0pt}
    \begin{subfigure}[t]{\textwidth}
        \centering
        \includegraphics[width=\linewidth]{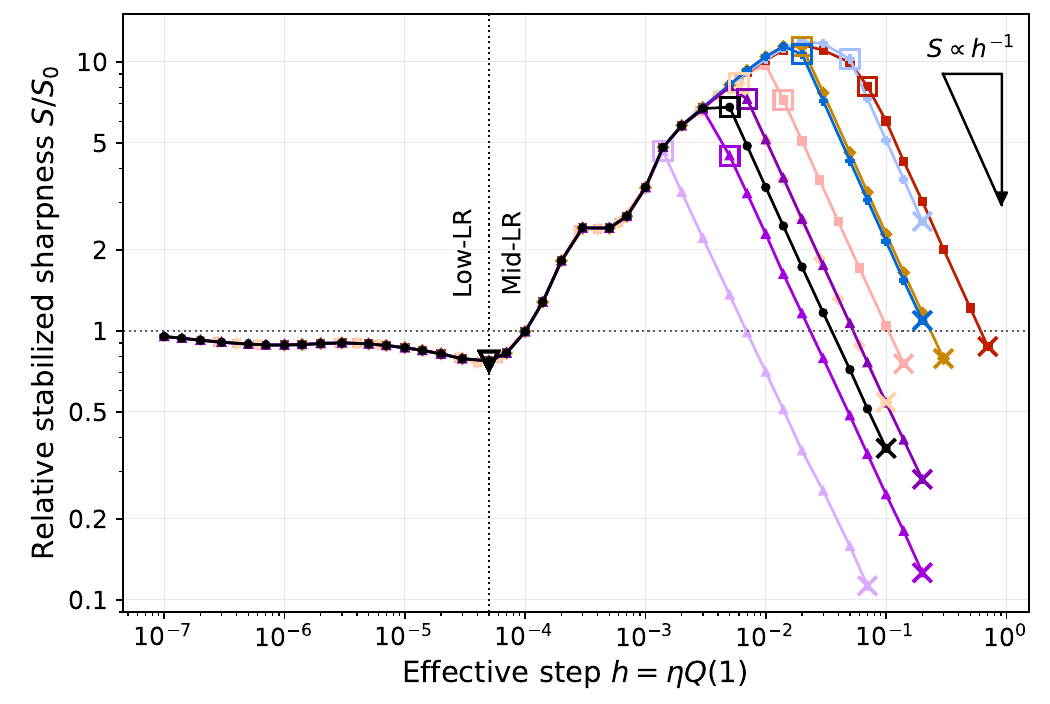}
        \vspace{-1.5em}
        \caption{\small CNN stabilized sharpness $S$ vs. effective lr $h$.}
        \vspace{3pt}
    \end{subfigure}
    \begin{subfigure}[t]{\textwidth}
        \centering
        \includegraphics[width=\linewidth]{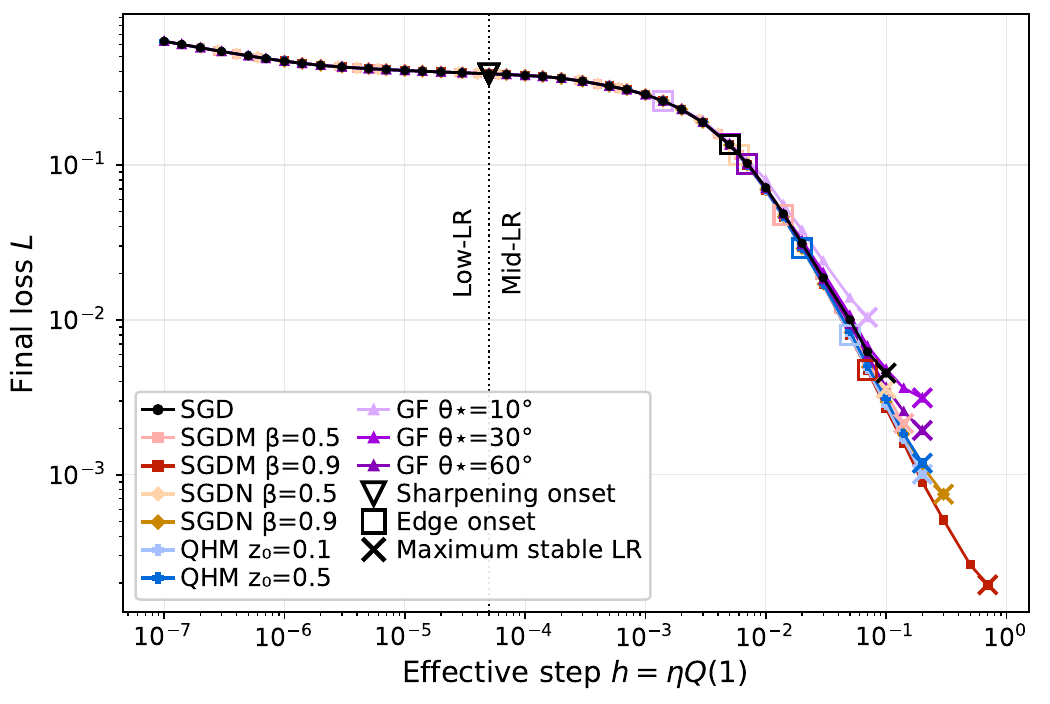}
        \vspace{-1.5em}
        \caption{\small CNN final loss $L$ vs. effective lr $h$.}
    \end{subfigure}
\end{minipage}%
\hfill
\begin{minipage}{0.4975\linewidth}
    \vspace{0pt}
    \begin{subfigure}[t]{\textwidth}
        \centering
        \includegraphics[width=\linewidth]{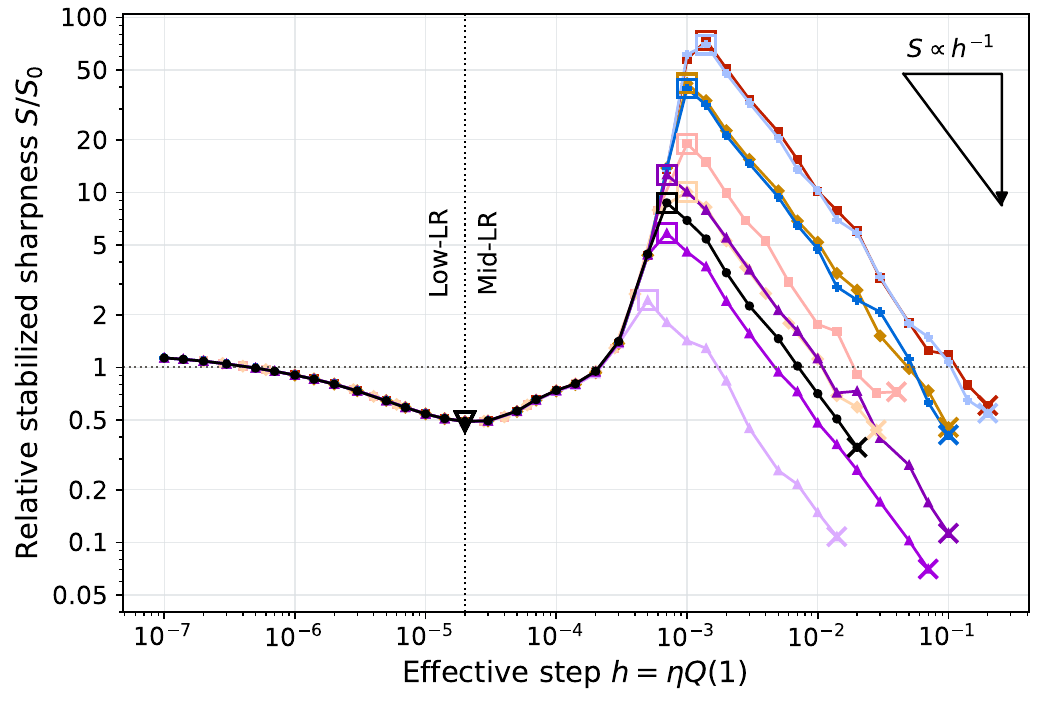}
        \vspace{-1.5em}
        \caption{\small ViT stabilized sharpness $S$ vs. effective lr $h$.}
        \vspace{3pt}
    \end{subfigure}
    \begin{subfigure}[t]{\textwidth}
        \centering
        \includegraphics[width=\linewidth]{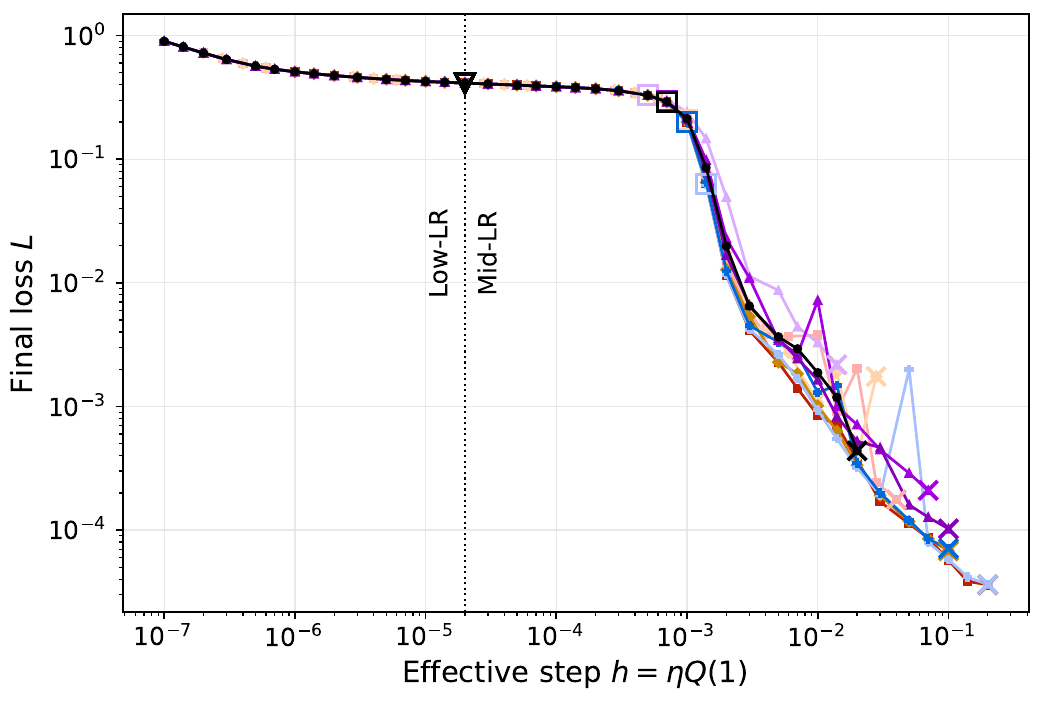}
        \vspace{-1.5em}
        \caption{\small ViT final loss $L$ vs. effective lr $h$.}
    \end{subfigure}
\end{minipage}
\vspace{-.3em}
\caption{%
\small
\textbf{Geometry repsonses of CNN and ViT.}
We observe common structures of low-LR softening $\mathrm dS/\mathrm dh < 0$ and mid-LR sharpening $\mathrm dS/\mathrm dh > 0$ in the sharpness $S$ curve.
However, the curve shapes are unique to each architecture, suggesting that this common curves are the model intrinsic \emph{material properties} independent of optimization algorithms.
}
\label{fig:geometry_response_of_cnn_and_vit}
\end{figure}

Three key observations can instantly be made from the figures.
First, to our surprise, the results from different optimizer families give strikingly similar shapes, especially in the wide range of lower learning rates.
If we align the results with respect to the effective leanring rate $h$ instead of raw $\eta$, the plots almost perfectly overlap, except for the final reciprocal slope of the edge of stability~\citep{cohen2021gradient}.

Second, the geometric responses $(L, S)$ naturally partitions the lr sweep results into three distinct regimes with respect to the effective lr $h = \eta Q(1)$.
(1) In the \emph{low-LR softening regime}, the sharpness drops $\mathrm dS / \mathrm dh < 0$, while the loss $L$ decreases only slightly.
All optimizers give almost identical curves for this regime.
(2) In the \emph{mid-LR sharpening regime}, the sharpness starts to grow $\mathrm dS / \mathrm dh > 0$, while the loss drop becomes amplified.
In this regime, the root loci traverse from their initial points to eventually escape the unit circle.
As a sweep trajectory approaches near \emph{its own} edge of stability, the consecutive gradients alignment $c_t$ starts to break down rapidly.
At the same time, the first order alias $\rho_t$ becomes significant $\rho_t > 1\%$, and thus the discrete-time descent algorithm can no longer be accurately approximated using the continuous-time flow model.
(3) In the final \emph{high-LR edge regime}, the sharpness $S$ becomes strictly reciprocal to the effective lr $S \propto 1/h$, following the edge of stability rule (\ref{eq:edge_of_stability_lee}).
Figure~\ref{fig:stacked_sgd_curves} separates the SGD case to visualize this transition more clearly.

Third, comparing the sharpness curves of MLP (Figure~\ref{fig:main_results}a), CNN (Figure~\ref{fig:geometry_response_of_cnn_and_vit}a), and ViT (Figure~\ref{fig:geometry_response_of_cnn_and_vit}b), we can see that the shapes of the optimizer-independent sharpness $S$ curves in the low- and mid-LR regimes are unique to each architecture.
This shape is thus a model-intrinsic \emph{material property}.
We use the term \emph{material property} to emphasize a property that is predetermined by the problem geometry and independent of the optimization algorithm.
We can additionally observe that all three representative architectures share the same \emph{three regime structures}: the sharpness initially softens in the low-LR regime, sharpens in the mid-LR regime to finally reaches the optimizer-specific edge of stability in the high-LR regime.
In the following sections, we study each of the three regimes in detail, with a special focus on the MLP results in Figure~\ref{fig:main_results}.

\section{Regime I: Low-LR Softening}
\label{sec:observation}
\begin{figure}[t]
\centering
\begin{minipage}{0.331\linewidth}
    \begin{subfigure}[t]{\textwidth}
        \centering
        \includegraphics[width=\linewidth]{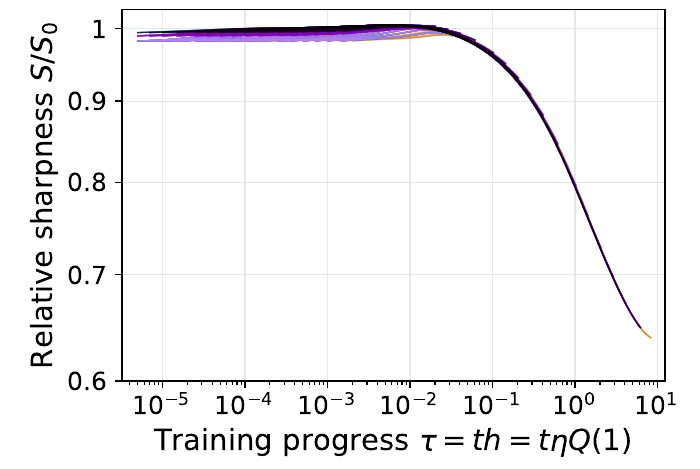}
        \caption{\small Low-LR $\tau$ vs. $S$.}
        \vspace{3pt}
    \end{subfigure}
\end{minipage}%
\hfill
\begin{minipage}{0.331\linewidth}
\hspace{0.01em}
    \begin{subfigure}[t]{\textwidth}
        \centering
        \includegraphics[width=\linewidth]{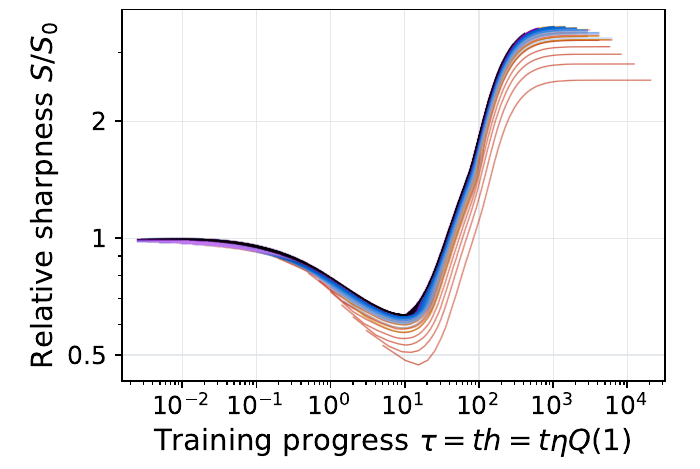}
        \caption{\small Mid-LR $\tau$ vs. $S$.}
        \vspace{3pt}
    \end{subfigure}
\end{minipage}
\hfill
\begin{minipage}{0.331\linewidth}
\hspace{0.01em}
    \begin{subfigure}[t]{\textwidth}
        \centering
        \includegraphics[width=\linewidth]{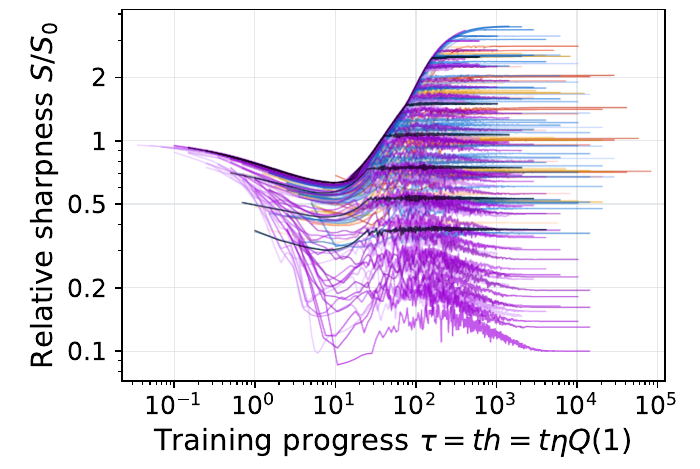}
        \caption{\small High-LR $\tau$ vs. $S$.}
        \vspace{3pt}
    \end{subfigure}
\end{minipage}
\begin{minipage}{0.331\linewidth}
    \begin{subfigure}[t]{\textwidth}
        \centering
        \includegraphics[width=\linewidth]{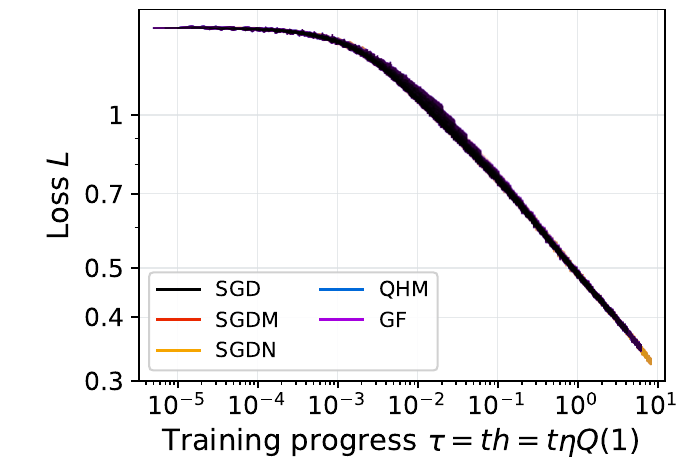}
        \caption{\small Low-LR $\tau$ vs. $L$.}
    \end{subfigure}
\end{minipage}%
\hfill
\begin{minipage}{0.331\linewidth}
\hspace{0.01em}
    \begin{subfigure}[t]{\textwidth}
        \centering
        \includegraphics[width=\linewidth]{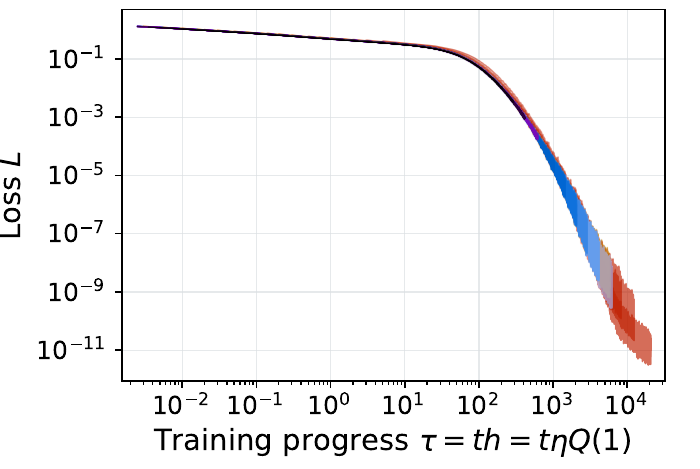}
        \caption{\small Mid-LR $\tau$ vs. $L$.}
    \end{subfigure}
\end{minipage}
\hfill
\begin{minipage}{0.331\linewidth}
\hspace{0.01em}
    \begin{subfigure}[t]{\textwidth}
        \centering
        \includegraphics[width=\linewidth]{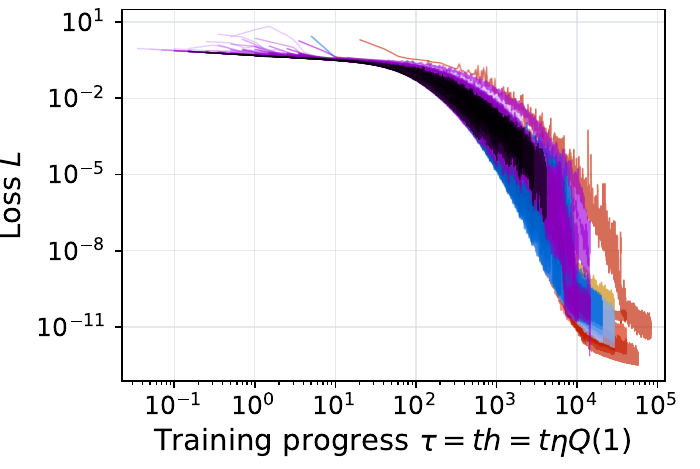}
        \caption{\small High-LR $\tau$ vs. $L$.}
    \end{subfigure}
\end{minipage}
\vspace{-.3em}
\caption{%
\small
\textbf{Geometry responses of MLP with full training traces.}
Instead of picking only the final values as in Figures~\ref{fig:main_results}a and \ref{fig:main_results}b, we sample intermediate values at every 50 iterations for each of the 338 non-blow-up runs, collecting a total of 1.352M entries.
We plot each run as a connected line drawn in the same plot, aligned in the training progress $\tau = ht = \eta Q(1) t$ coordinate.
The low-LR traces almost perfectly overlap, whereas the high-LR traces show clearly different trajectories.
}
\label{fig:full_training_trace_of_mlp}
\end{figure}

The first of the three regimes, the low-LR regime, is characterized by its (1) overlapping traces in the effective lr $h$ and (2) in the training progress $\tau$ coordinates, (3) sharpness softening $\mathrm dS / \mathrm dh < 0$, and (4) shallow loss drop.

\paragraph{Overlapping traces in $h$ coordinate.}
The first thing we notice in Figures~\ref{fig:main_results}a and \ref{fig:main_results}b is that the plots from different optimizers become almost indistinguishable when we align the curves using the effective learning rate $h$ coordinate.
The difference between lr $\eta$ and effective lr $h$ is the dc gain $Q(1)$ of the optimizer.
Normalized optimizers such as simple SGD or normalized Heavy Ball $\vmu_{t+1} = \beta \vmu_t + (1 - \beta) \vg_t$ have a unit dc gain $Q(1) = 1$, hence $h = \eta$.
Conventionally, however, HB or NAG momenta are defined as unnormalized forms: $\vmu_{t+1} = \beta \vmu_t + \vg_t$, yielding nontrivial dc gains $Q(1) = 1/(1-\beta)$.
Normalization separates this secret multiplier $Q(1)$ and merges it into the learning rate $\eta$.
This separation clearly distinguishes the role of the optimizer from that of the lr scheduler: optimizers reshape the parameter updates, while learning rates scale them.
Then the plot coincidence confirms that the sharpness drop in this low-LR regime is independent of the optimizer's temporal structure, hinting a material property of the model-loss complex.

\paragraph{Overlapping traces in $\tau$ coordinate.}
In Figure~\ref{fig:full_training_trace_of_mlp}, we trace the geometric responses $(L, S)$ \emph{per run} by connecting their intermediate values from each run as a line onto the same plot.
It is equally interesting to see that these traces also overlap in the training progress $\tau = \eta Q(1) t$ coordinate.
This graph coincidence means that the training dynamics is insensitive to the optimizer type in this regime, \emph{effectively following the continuous-time gradient flow} $\mathrm d\vtheta / \mathrm d\tau = -\vg$.
Optimizers in this regime only affect through its dc gain $Q(1)$ scaling the timescale $\tau$.
Additional evidences such as almost perfectly aligned gradients $c_t \approx 1$ in Figure~\ref{fig:main_results}d, negligible higher order aliasing $\rho_t \ll 1\%$ in Figure~\ref{fig:main_results}e, and root loci parked in their initial positions in Figure~\ref{fig:main_results}c further confirm this.

\paragraph{Sharpness softening.}
As Figures~\ref{fig:main_results}a, \ref{fig:geometry_response_of_cnn_and_vit}a and \ref{fig:geometry_response_of_cnn_and_vit}b show, the sharpness $S$ decreases from its initial value $S_0 = \lambda_\text{max}(\mH(\vtheta_0))$ until it reaches some minimum value $S_\text{min}$ at $h_\text{min}$.
This \emph{softening} at low effective lr $h$ appears to be a common feature across different architectures, distinguishing the low-LR regime from other regimes with higher effective lr $h$, characterizing the low-LR regime as a \emph{sharpness softening} regime.
Overlapping $S(\tau)$ traces in Figure~\ref{fig:full_training_trace_of_mlp} further identify the sharpness drop $F := \mathrm dS / \mathrm d\tau < 0$ as a material property in this regime.
We call $F$ the \emph{sharpening speed}.

\paragraph{Shallow loss drop.}
For completeness, we inspect the loss $L$ vs $h$ plot in Figure~\ref{fig:main_results}b and Figures~\ref{fig:geometry_response_of_cnn_and_vit}b and \ref{fig:geometry_response_of_cnn_and_vit}d.
Consistently, in all the three curves, the loss does not change much from the initial value until the sharpness turns around at $h_\text{min}$ to start increasing.

\section{Regime II: Mid-LR Sharpening}
\label{sec:theory}
\begin{figure}[t]
\centering
\begin{minipage}{0.248\linewidth}
    \begin{subfigure}[t]{\textwidth}
        \centering
        \includegraphics[width=\linewidth]{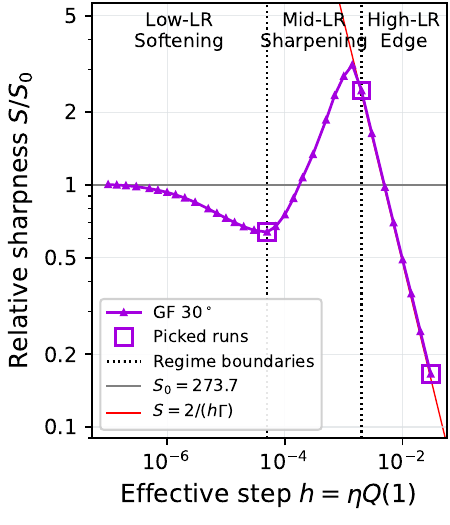}
        \caption{\small $h$ vs. $S$ for GF30.}
    \end{subfigure}
\end{minipage}%
\hfill
\begin{minipage}{0.248\linewidth}
\hspace{0.01em}
    \begin{subfigure}[t]{\textwidth}
        \centering
        \includegraphics[width=\linewidth]{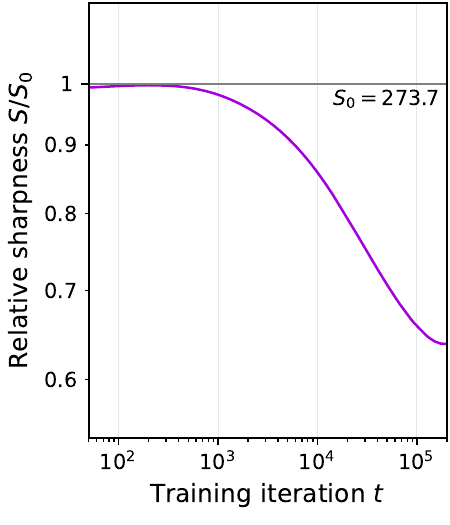}
        \caption{\small Low-LR $t$ vs. $S$.}
    \end{subfigure}
\end{minipage}%
\hfill
\begin{minipage}{0.248\linewidth}
\hspace{0.01em}
    \begin{subfigure}[t]{\textwidth}
        \centering
        \includegraphics[width=\linewidth]{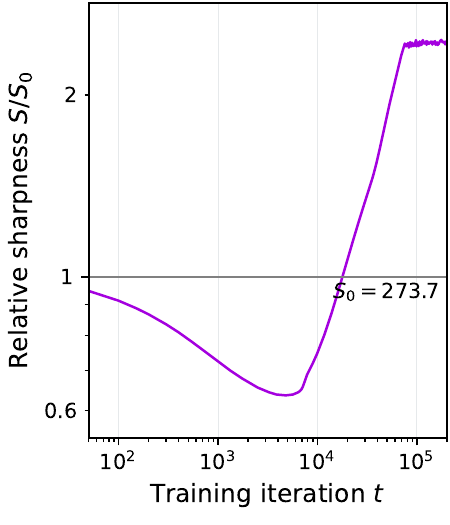}
        \caption{\small Mid-LR $t$ vs. $S$.}
    \end{subfigure}
\end{minipage}
\hfill
\begin{minipage}{0.248\linewidth}
\hspace{0.01em}
    \begin{subfigure}[t]{\textwidth}
        \centering
        \includegraphics[width=\linewidth]{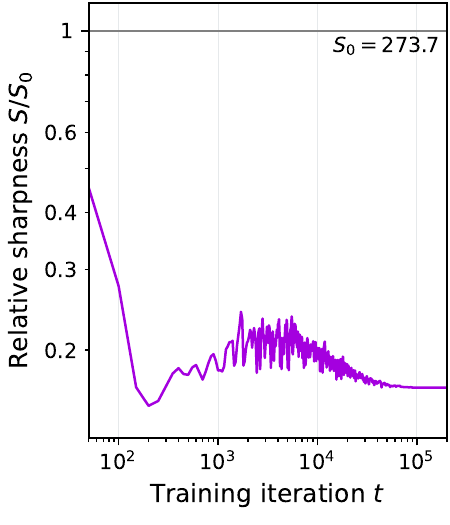}
        \caption{\small High-LR $t$ vs. $S$.}
    \end{subfigure}
\end{minipage}
\vspace{-.3em}
\caption{%
\small
\textbf{Progress sharpening occurrence depends on the effective learning rate $h$.}
We plot traces of the sharpness $S$ over the \emph{true training iterations} $t$ (not $\tau$ this time!) using a single exemplar case of a 30-degree escape Grokfast optimizer (GF30).
Progress sharpening only occurs when the stabilized sharpness is larger than the initial sharpness $S/S_0 > 1$, condition of which is governed by the effective learning rate $h$.
}
\label{fig:progressive_sharpening}
\end{figure}

Past the minimum stabilized sharpness $S(h_\text{min}) = S_\text{min}$, the training dynamics enters the mid-LR regime, which possesses several distinct characteristics than the low-LR regime.
This regime is mainly characterized by (1) \emph{progressive sharpening} of $S$~\citep{cohen2021gradient} and (2) rapid loss drop $L$.
Also, the mid-LR regime is a \emph{transient} regime: it bridges between the material intrinsic continuous-time gradient flow in the low-LR regime and the optimizer-dependent edge of stability in the high-LR regime.
We see this (3) emergence of optimizer-dependence in multiple signatures.

\paragraph{Progressive sharpening.}
The most salient feature of the mid-LR regime is the sharpening.
This can be observed in two different perspectives:
First, the final stabilized sharpness $S(h)$ at the end of training with respect to the effective step size $h$ becomes higher than the initial sharpness $S(h) > S_0$, as Figures~\ref{fig:main_results} and \ref{fig:geometry_response_of_cnn_and_vit} show.
Second, in a single training run, sharpening happens after the sharpness hits the minimum stabilized sharpness $S_\text{min}$ as shown in Figure~\ref{fig:full_training_trace_of_mlp}b and in Figure~\ref{fig:progressive_sharpening}c.
Then the effect manifests as a \emph{progressive sharpening} $F = \mathrm dS / \mathrm d\tau > 0$: the model sharpens itself along the gradient flow.
We can make several observations about this effect:
First, the sharpening first happens before the edge regime, where the geometry responses are material properties, rather than optimizer-dependent.
Second, the sharpening does not always take place, as the previous works often presume~\citep{cohen2021gradient,damian2023self}.
Rather, it is only observed when the final stabilized sharpness $S(h)$ is higher than the initial sharpness $S_0$.
Such condition is determined by the optimizer filter $Q$ and the effective lr $h$.
For example, in Figure~\ref{fig:progressive_sharpening}d, we do not observe progressive sharpening when the learning rate is high enough to make the sharpness stabilized below the initial value $S_0$.
Third, it is shown in Figure~\ref{fig:full_training_trace_of_mlp}c that there exist a certain \emph{material intrinsic upper envelope of the sharpness curve} $S(\tau)$ with respect to the training progress $\tau = \eta Q(1) t$ that upper-bounds the sharpness evolution.
This envelope is dependent on the model, but is independent of the optimizer filter $Q$.
Overall, the progressive sharpening effect is a \emph{material property} of the loss landscape, but it is only realized when the optimizer and the learning rate \emph{jointly approve} the sharpness $S(h)$ to be stabilized at higher values than the initial sharpness $S_0$.

\paragraph{Accelerated loss drop.}
The loss drop curve with respect to the training progress time $\tau = \eta Q(1) t$ in Figure~\ref{fig:full_training_trace_of_mlp}e shows that the loss drop begins to speed up.
This can also be observed in Figures~\ref{fig:main_results}b, \ref{fig:geometry_response_of_cnn_and_vit}b, and \ref{fig:geometry_response_of_cnn_and_vit}d, where the final loss traces become steeper and steeper as the effective lr increases.
The trajectories from different optimizers start to deviate from each other, but this deviation is still small in this regime, limiting the role of the optimizer family in this mid-LR regime.

\begin{wrapfigure}{r}{0.48\linewidth}
\centering
\vspace{-0.5em}
\includegraphics[width=\linewidth]{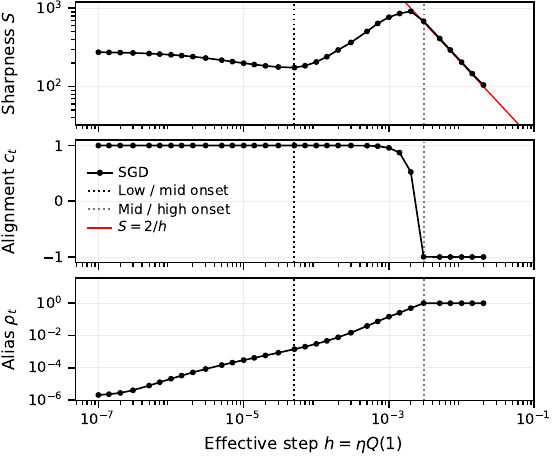}
\vspace{-1.9em}
\caption{%
\small
\textbf{A journey to the edge.}
A subset of Figure~\ref{fig:main_results} showing the SGD curves.
}
\label{fig:stacked_sgd_curves}
\vspace{-2em}
\end{wrapfigure}

\paragraph{Emergence of optimizer-dependence.}
The mid-LR regime connects the optimizer-agnostic low-LR regime and the optimizer-dependent edge regime.
The emergence of optimizer-dependence happens in multiple signatures: (1) aliasing effect $\rho(h)$ consistently increases according to $h$ and becomes significant, (2) consecutive gradients become less aligned, changing its cosine $c_t$ from 1 to the lowest value, and (3) the root locus traverses from the initial pole position to the escape of the unit circle.
Escaping from the unit circle marks the entrance into the edge regime.

We extract the SGD subset of Figure~\ref{fig:stacked_sgd_curves} from Figure~\ref{fig:main_results} to visualize what is happening during this transition more clearly.
In the mid-LR regime, the first order alias $\rho(h)$ increases from less than $1\%$ gradually to $100\%$ (depending on the optimizer family) along the effective lr $h$.
Higher order terms of the Taylor series expansion of the gradient descent step $\mathrm d \vtheta / \mathrm d \tau - \Delta \vtheta / h = O(h)$ become dominant, making the continuous-time gradient flow no longer a good approximation of the training dynamics.
Pure discrete-time effects take place.
The consecutive gradient pairs begin to misalign.
The cosine $c_t$ decreases from 1.
As we prove in Proposition~\ref{prop:second_order_alias}, the first non-zero derivative term of the Taylor series expansion of the cosine $c_t$ is $O(h^2)$, thereby making the alignment $c_t$ as a relatively lagging indicator than the alias $\rho_t$ of this approach to the edge.

At the same time, the optimizer pole traversal in the root loci becomes salient.
In the low-LR regime, the poles remain in the vicinity of the original position.
As the effective lr $h$ and the sharpness $S$ increase, so does the characteristic gain $k = hS$.
The poles traverse across the loci according to the characteristic equation (\ref{eq:eigencomponent_transfer_function_normalized}) and towards the prescribed unit circle escape locations.
Figure~\ref{fig:main_results}c shows how this happens for different optimizers.
It is noteworthy that the escape location on the unit circle only depends on the optimizer shape $\bar Q(z)$ and independent of the learning rate $\eta$, the gain $Q(1)$, the model and other components of the learning system.
We now reach the edge of stability.

\section{Regime III: High-LR Edge}
\label{sec:discussion}
\begin{wrapfigure}{r}{0.34\linewidth}
\centering
\vspace{-1.2em}
\includegraphics[width=\linewidth]{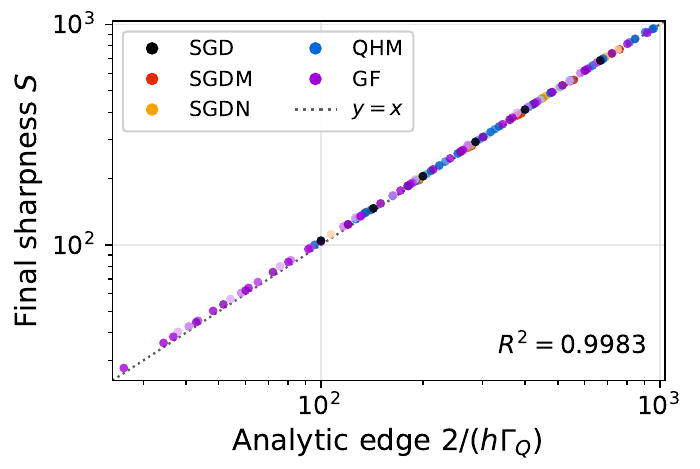}
\vspace{-1.9em}
\caption{%
\small
\textbf{The edge of stability.}
Eq. (\ref{eq:edge_of_stability_lee}) holds for all optimizers and lrs.
}
\label{fig:edge_results}
\vspace{-2em}
\end{wrapfigure}

In the past two regimes, we find that the model geometry changes mostly intrinsically, independently of the optimizer $Q$.
However, we now find that the optimizer $Q$ starts to play a role in determining the training dynamics in this high-LR regime.
Specifically, the unique shape of each optimizer's root locus determines the escape location on the unit circle, the gain at that location determines the position of the edge, and the effective time quantum $h = \eta Q(1)$ determines the sharpness $S$ according to that edge.
We elaborate on these findings in the following.

\begin{figure}[t]
\centering
\begin{minipage}{0.485\linewidth}
    \centering
    \includegraphics[width=\linewidth]{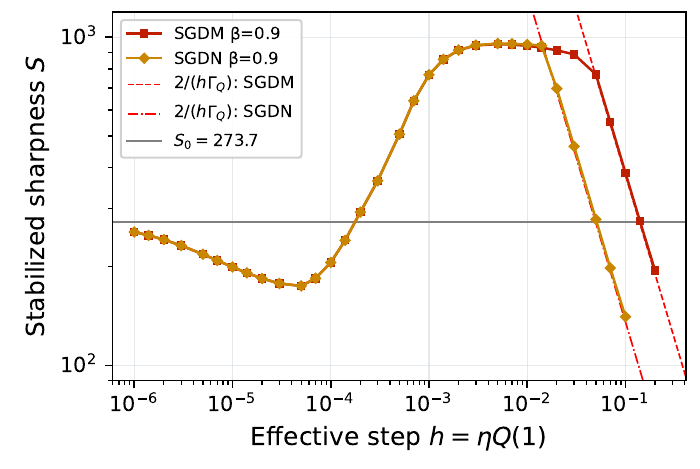}
    \caption{\small 
\textbf{The role of optimizer.}
The final sharpness $S$ of SGD with Heavy Ball and with Nesterov's momentum having the same hyperparameter $\beta = 0.9$.
}
\label{fig:momentum_sharpness}
\end{minipage}%
\hfill
\begin{minipage}{0.485\linewidth}
    \centering
    \includegraphics[width=\linewidth]{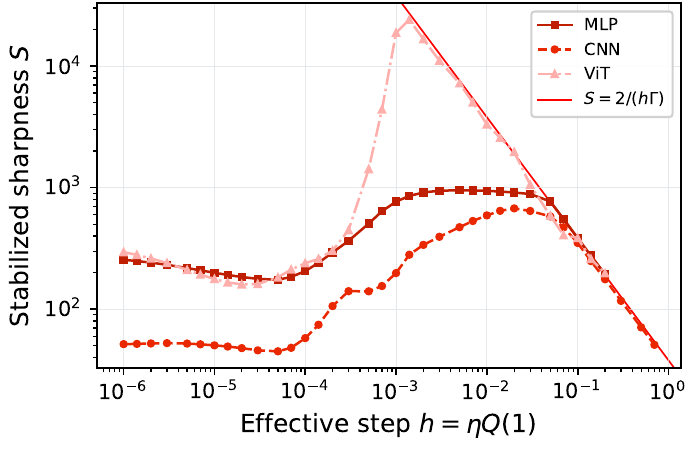}
    \caption{\small 
\textbf{Model chooses the edge entrance, the edge chooses the model sharpness.}
Models sharing an SGDM ($\beta = 0.9$) optimizer share the same edge.
}
\label{fig:architecture_sharpness}
\end{minipage}
\end{figure}

\paragraph{Root locus prescribes the unit circle escape.}
Figure~\ref{fig:main_results}c draws the root loci of the 10 optimizers used throughout this paper, each of which have unique shape of the loci (the curve).
Despite the different shapes, all the loci shares same three-regime behaviors.
The loci start from the initial pole positions ($\times$).
During the low-LR regime, the loci remains relatively at fixed point near the initial point until the minimum sharpness is reached ($\bigtriangledown$).
Then, as mentioned in Section~\ref{sec:theory}, the loci traverses from this point ($\bigtriangledown$) to the edge entrance ($\Box$), which happens at the unit circle escape.
In the high-LR regime, the loci stays at that point ($\Box$) until the training breaks down due to unstably high learning rates.
For one-pole optimizers, this escape point is at the Nyquist frequency ($z = -1$), i.e., the highest frequency indicating a period-2 oscillation.
However, more exotic optimizers with more than one pole, such as Grokfast~\citep{lee2024grokfast}, can escape at other angles.
The gain of the optimizer at this escape point $\Gamma$ is solely determined by the normalized kernel of the optimizer filter $Q(z) / Q(1)$~\citep{lee2026road}:

\begin{equation}
\label{eq:gain_at_escape}
\Gamma(Q) = \max_{\omega \in \Omega_{+}(Q)} \left| \frac{2 Q(e^{i\omega})}{(1 - e^{i\omega}) Q(1)} \right|, \;\; \text{where} \;\;
\Omega_{+}(Q) := \left\{\, \omega \in (0, \pi] : \frac{1 - e^{i\omega}}{Q(e^{i\omega})} \in \mathbb{R}_{>0} \,\right\}.
\end{equation}

\paragraph{Escape gain determines the edge.}
Figure~\ref{fig:edge_results} shows the strict reciprocal relationship (\ref{eq:edge_of_stability_lee}) between the final sharpness $S$ and the effective lr $h$, mediated by the root locus escape gain $\Gamma$ in this high-LR regime.
The stabilized sharpness $S$ can be accurately predicted by the predetermined, optimizer-specific escape gain $\Gamma$, the dc gain $Q(1)$, and the learning rate $\eta$ by $\tilde S = 2 / (\eta Q(1) \Gamma)$.
Figure~\ref{fig:momentum_sharpness} further shows the significance of this escape gain in determining the edge.
SGD with Heavy Ball and with Nesterov's momentum having the same hyperparameter $\beta = 0.9$ share the same dc gain $Q(1) = 1/(1-\beta)$, resulting in almost identical sharpness traces until the edge.
However, their escape gains $\Gamma$ determined by the optimizer root loci are different, resulting in different entrance point into the edge.
Hence the different sharpness only in the high-LR regime.

\paragraph{Effective lr determines the sharpness along the edge.}
Figure~\ref{fig:architecture_sharpness} shows the stabilized sharpness traces of three different models, MLP, CNN, and ViT, trained with the same SGDM ($\beta = 0.9$) optimizer, drawn with respect to the effective learning rate $h$.
Different models have different \emph{material} sharpness responses in the low- and mid-LR regimes, resulting in different entrance points into the edge of stability.
However, since they share the same optimizer, they all share the same edge of stability.
Once the learning system enters the edge regime, the learning rate $h$ takes full charge of determining the sharpness $S$ along the edge, regardless of the model architecture, until the training collapses due to high learning rate.
And this concludes our journey to the edge of stability.

\section{Summary and Conclusion}
\label{sec:conclusion}
We have explored the behavior of a deep learning system under various first order optimization methods.
To our surprise, we found that in the lower learning rates before the edge of stability is reached, the traces of the geometry evolution $(L, S)$ are insensitive to the specific optimization algorithm in use.
The optimizer plays a role in the higher learning rates to steer the learning system into the edge of stability.
Only then, according to the EoS reciprocal rule, the sharpness $S$ becomes predictable from the effective learning rate $h = \eta Q(1)$ until the training goes unstable.

The shared sharpness curve $S(h)$ separates the \emph{material property of the learning system} from the \emph{optimizer-induced edge of stability}.
The evolution of the learning system below the edge of stability is largely prescribed by the model before we define what optimizer to use, but the edge is placed by the optimizer.
The optimizer's role is separated into different mechanistic parts.
The dc gain $Q(1)$ controls the time scale along with the learning rate $\eta$.
Placing the edge of stability is done by the root locus escape gain $\Gamma$ of the remaining normalized filter $Q(z)/Q(1)$, independently of the learning rate and the model being trained.
Therefore, our observations suggest not only to separate the \emph{gain} $\eta Q(1)$ from the \emph{shaping} $Q(z) / Q(1)$ in the description of the first order optimization algorithm, but also to distinguish the problem-specific, \emph{material} properties from the optimizer-induced, \emph{edge of stability}.
It is worth noting that the most effective learning happens at the edge of stability, where the optimizer, the model, the loss, and the data all play their roles in a delicate balance.

\subsection*{AI use statement}

In this work, we used AI assistants solely to assist with writing code and in assisting with mathematical proofs.
All mathematical claims or proofs augmented with AI assistance were thoroughly reviewed, verified, and rewritten as needed by the human authors to ensure correctness and rigor.
We take full responsibility for the final content of this work, including all text, code, and mathematical results.

\subsection*{Reproducibility statement}

We have made every effort to ensure that our results are reproducible.
Sufficient explanations of our methods and setups are provided in Section~\ref{sec:preliminary} and Section~\ref{sec:observation}.
All figures presented in the paper can be fully reproduced using the code and data included in the supplementary material.

\bibliography{main,related_work_new}
\bibliographystyle{iclr2027_conference}

\appendix

\section{Related Work}
\paragraph{Edge of stability.}
Performing a gradient descent according to a quadratic loss function $L(\theta) = \frac{1}{2} \theta^\top \mH \theta$ has a maximum learning rate $\eta_{\max} = 2 / \lambda_{\max}(\mH)$ beyond which the learning starts to diverge.
On the contrary, the edge of stability (EoS)~\citep{cohen2021gradient} states that in a large-scale deep learning, this boundary is often violated without instability.
Instead, it is observed in multiple works~\citep{lyu2022understanding,cohen2023adaptive,damian2023self,agarwala2023second,zhu2023understanding,andreyev2024edge,chen2024stability,cohen2025central,islamov2026noneuclidean,andreyev2026momentum,litman2026origin,lee2026road} that the sharpness of the loss landscape $S := \lambda_{\max}(\mH)$ hovers around this value during deep learning practice.
As a relatively new concept, the governing equation of the edge of stability has been proposed in multiple versions, each of which is based on different scenarios.
\begin{itemize}
    \item Original formulation reported for full-batch gradient descent~\citep{cohen2021gradient}:
    \begin{equation}
    \label{eq:original_sharpness}
    S \;:=\; \lambda_{\max} (\nabla_{\vtheta}^2 L(\vtheta)).
    \end{equation}
    \item Preconditioner $\mP$ updates the eigenspectrum defining sharpness~\citep{cohen2023adaptive}: 
    \begin{equation}
    \label{eq:preconditioned_sharpness}
    S \;:=\; \lambda_{\max} (\mP^{-1/2} \nabla_{\vtheta}^2 L(\vtheta) \mP^{-1/2}).
    \end{equation}
    \item Momentum $\beta$ multiplies a factor $\Gamma(\beta)$ on the left side~\citep{cohen2023adaptive}: 
    \begin{equation}
    \label{eq:momentum_sharpness}
    S \; \leq \; \frac{2}{\eta \Gamma(\beta)}.
    \end{equation}
    \item Stochastic regime changes the definition of sharpness $S$ to Batch Sharpness~\citep{andreyev2024edge}, which is a batch-averaged directional curvature~\citep{lee2023ias,mishkin2024directional} of the loss: 
    \begin{equation}
    \label{eq:batch_sharpness}
    S_\text{B} \;:=\; \mathbb{E}_{\gB \sim \gP_b} \left[ \frac{\nabla_{\vtheta} L_\gB(\vtheta)^\top \nabla_{\vtheta}^2 L_\gB(\vtheta) \nabla_{\vtheta} L_\gB(\vtheta)}{\|\nabla_{\vtheta} L_\gB(\vtheta)\|^2} \right] \leq \frac{2}{\eta},
    \end{equation}
    where $\gP_b$ is the distribution of mini-batches $\gB$ and $L_\gB(\vtheta)$ is the loss from a mini-batch $\gB$.
    \item Mini-batch size modulates the momentum-induced factor $\Gamma(\beta, |\gB|)$~\citep{andreyev2026momentum}:
    \begin{equation}
    \label{eq:mini_batch_sharpness}
    S_\text{B} \; \leq \; \frac{2}{\eta \Gamma(\beta, |\gB|)}.
    \end{equation}
\end{itemize}
Overall, the governing inequality has continuously been evolving based on empirical studies.
In this paper, we fix the batch size in the large batch regime specified by \citet{andreyev2026momentum} and focus on the unpreconditioned maximum Hessian eigenvalue $S = \lambda_{\max}(\mH)$ as the sharpness of the loss landscape, and empirically show that the edge of stability with respect to this sharpness is governed by equation~(\ref{eq:edge_of_stability_lee}) with a correction term $\Gamma(Q)$ that is the root locus escape gain of the optimizer filter $Q$~\citep{lee2026road}.

\paragraph{Driving mechanism of the edge of stability and progressive sharpening.}
Multiple works have been searching for the mechanistic insight into the edge of stability.
Related phenomena of progressive sharpening~\citep{cohen2021gradient,damian2023self}, training instabilities~\citep{gilmer2022loss}, and structured dynamics beyond the quadratic stability boundary are reported to be connected to the edge of stability \citep{jastrzebski2019relation,wang2022analyzing,ahn2022unstable,arora2022understanding}.
To understand the driving mechanism, self-stabilization by higher-order geometry, simplified regression and network models, period-2 oscillation and two-step stability, bifurcation dynamics, and continuous reductions of edge motion \citep{damian2023self,agarwala2023second,zhu2023understanding,chen2023beyond,song2023trajectory,cohen2025central,hofmann2026map} have been proposed.
Phase-diagram analyses show that the leanring rate, the width of the model, the initialization, and the warmpup algorithm participate in sharpness reduction, catapult~\citep{lewkowycz2020large} mechanism, progressive sharpening, and edge of stability into reproducible regimes~\citep{kalra2023phase,kalra2024warmup,kalra2025universal}.
Our contribution is complementary to these works.
We study how the model geometry evolves across diverse set of optimizers and learning rates and uncover a three-regime behavior starting from a shared pre-edge material response to traversal mechanism in the mid-LR regime, and finally to an optimizer-specific root locus escape.
Noteably, previous works~\citep{cohen2021gradient,damian2023self} have presumed the existence of progressive sharpening that transports the model onto the edge of stability without explicitly defining the mechanism.
From a comprehensive study, we show that the progressive sharpening is a model's material property that happens only when the stabilized sharpness is greater than the initial sharpness, and when the effective learning rate allows the sharpness to increase.

\paragraph{First order methods and momentum.}
Classical Heavy Ball~\citep{polyak1964some,sutskever2013importance} and Nesterov's Accelerated Gradient~\citep{nesterov1983method,sutskever2013importance} introduce memory into first-order optimization~\citep{robbins1951stochastic}.
Optimizer with states have been studied in the context of control theory~\citep{su2016differential,lessard2016analysis,gitman2019momentum,muehlebach2021optimization,lee2024grokfast,lee2026road}.
In this context more advanced form of state is proposed, such as quasi-hyperbolic momentum~\citep{ma2018quasi}, AggMo~\citep{lucas2019aggregated}, Grokfast~\citep{lee2024grokfast}, and AdEMAMix~\citep{pagliardini2025ademamix}, that extend the conventional momentum-based optimizers by allowing more than one pole or zeros.
We also treat an unpreconditioned first-order method as a causal gradient filter $Q$ and study its transfer function.
Our observation further suggests to separate its scalar dc gain $Q(1)$ that is absorbed into the effective learning rate $h=\eta Q(1)$ from the normalized temporal shape $\bar Q(z)=Q(z)/Q(1)$.

\paragraph{Learning-rate sweeps and finite-step theory.}
From the decades of practice in deep learning, learning rate has been established as a primary control variable in deep learning \citep{smith2017cyclical}, and scaling-rule and noise-scale studies linked learning rate to batch size and stochastic dynamics \citep{smith2018dont,smith2018bayesian}.
It has been shown that the learning rate can control more than just a time scale of the optimizaiton process.
Large initial learning rates can alter the order of feature acquisition and implicit bias \citep{li2019initial}, and the maximal usable learning rate can differ between initialization and later training \citep{iyer2023maximal}.
Distinction between the discrete-time gradient descent and the continuous-time gradient flow has been formalized by backward-error and modified-equation analyses \citep{li2017stochasticmodified,li2019stochasticmodified,barrett2021implicit,smith2021origin,miyagawa2022equation}.
Recent work also shows that the landscape statistics and transferable learning rates can exhibit architecture-dependent yet highly structured scaling \citep{li2020intrinsic,noci2024super}.
Most of the works focus on the learning rate selection and schedule design, whereas we use a dense fixed-task sweep to study the governing dynamics of the \emph{learning system}.
We find that the dc-adjusted step $h = \eta Q(1)$ indexes the common loss and sharpness responses in the pre-edge regimes, whereas the filter shape $\bar Q(z)$ determines the entry point and the trajectory to the edge.

\label{sec:survey}

\section{Some Mathematics behind the Transition to the Edge}
This section discusses the mathematics behind the transition from the pre-edge gradient flow dynamics to the edge of stability.
In specific, we provide Taylor expansions of the filtered gradient descent that connects the various measurements, i.e., the sharpness $S$, the loss $L$, the alignment $c_t$, and the alias $\rho_t$, from the main text into the same dynamics.
All expansions below are local small $h$ expansions for a fixed smooth objective and a fixed stable filter at an anchored iteration $t$.

\paragraph{Basic notations.}
Let $L(\theta)$ be a scalar objective function and $Q(z)$ be a causal gradient filter with an impulse response $q_j$ likewise in the main text.
A learning rate $\eta > 0$ is a \emph{sampling period} or a \emph{time scale} that realizes a continuous-time gradient flow into a discrete-time gradient descent.
Define $\vg := \nabla L(\theta)$ and $\mH := \nabla^2 L(\theta)$.
Let $\vg_{\le t}$ be a causal sequence of gradients up to \emph{training iteration index} $t$.
Filtered gradient descent updates the parameter $\vtheta$ as:
\begin{equation}
\label{eq:filtered_update}
\vtheta_{t+1} \;=\; \vtheta_t - \eta (Q * \vg_{\le t})_t, \qquad (Q * \vg_{\le t})_t \;=\; \sum_{j\ge 0} q_j \vg_{t-j}.
\end{equation}
$Q(1) = q_0 + q_1 + q_2 + \cdots \in (0, \infty)$ is the dc gain of the filter $Q$.
Using the \emph{effective learning rate} $h := \eta Q(1)$, we can rewrite the update into a normalized form as:
\begin{equation}
\label{eq:filtered_update_normalized}
\vtheta_{t+1} \;=\; \vtheta_t - h (\bar Q * \vg_{\le t})_t, \qquad \bar Q(z) \;=\; \frac{Q(z)}{Q(1)},
\end{equation}
where $\bar Q(1) = 1$.
We additionally define a \emph{canonical training progress} $\tau = ht$ that denotes the time index in the continuous-time gradient flow.
Unless otherwise specified, we simply call $\tau$ the \emph{time}.

\paragraph{Flow, descent, and aliasing.}
Consider a continuous-time gradient flow as a reference dynamics
\begin{equation}
\label{eq:gradient_flow}
\frac{\mathrm d\vtheta}{\mathrm d\tau} \;=\; -\vg.
\end{equation}
This corresponds to the discrete-time update with a time step $h$.
We can \emph{sample} from this continuous-time dynamics to obtain an equivalent discrete-time gradient descent algorithm in the parameter space.
However, due to the discretization, we have an \emph{aliasing term} that signifies the difference between the ideal flow and the acutal learning dynamics.
We can evaluate the corresponding discrete-time gradient descent step in the parameter space from the Taylor expansion of the continuous-time dynamics of equation~(\ref{eq:gradient_flow}).
\begin{equation}
\label{eq:gradient_flow_taylor_expansion}
\vtheta(\tau+h) \;=\; \vtheta(\tau) - h \vg + \frac{h^2}{2} \mH \vg + O(h^3).
\end{equation}
Memoryless gradient descent takes $\vtheta_{t+1} \leftarrow \vtheta_t - h \vg_t$ instead of the exact gradient-flow segment of equation~(\ref{eq:gradient_flow_taylor_expansion}), and therefore differs from the exact gradient-flow segment by
\begin{equation}
\label{eq:gradient_flow_taylor_expansion_difference}
\vtheta_{t+1} - \vtheta(\tau+h) \;=\; -\frac{h^2}{2} \mH \vg_t + O(h^3).
\end{equation}
This is the main source of the trajectory bias relative to gradient flow.

\paragraph{Gradient gradient flow.}
We can also think of a \emph{gradient flow of the gradients} that happens along the \emph{gradient flow of the parameters}.
From the chain rule, we have
\begin{equation}
\label{eq:gradient_flow_of_gradients}
\frac{\mathrm d\vg}{\mathrm d\tau} \;=\; \frac{\mathrm d}{\mathrm d\tau} \left( \nabla L(\vtheta) \right) \;=\; \nabla^2 L(\vtheta) \frac{\mathrm d\vtheta}{\mathrm d\tau} \;=\; -\nabla^2 L(\vtheta) \vg \;=\; -\mH \vg.
\end{equation}
In other words, the continuous-time gradient flow not only transfers the parameter, but also moves the gradient itself.
The Hessian $\mH$ is the \emph{rate of change} of the gradient along the first order ODE.
From this, we can derive the \emph{acceleration} of the parameter along the gradient flow as well by
\begin{equation}
\label{eq:gradient_flow_of_gradients_acceleration}
\frac{\mathrm d^2\vtheta}{\mathrm d\tau^2} \;=\; -\frac{\mathrm d \vg}{\mathrm d\tau} \;=\; \mH \vg.
\end{equation}
Note that this is automatically given from equation~(\ref{eq:gradient_flow}) by the chain rule.
Likewise, any order derivative of the parameter along the flow is automatically given.
If we take another derivative from equation~(\ref{eq:gradient_flow_of_gradients}), we introduce the third order term
\begin{equation}
\label{eq:gradient_flow_of_gradients_second_derivative}
\frac{\mathrm d^2\vg}{\mathrm d\tau^2} \;=\; -\frac{\mathrm d\mH}{\mathrm d\tau} \vg - \mH \frac{\mathrm d \vg}{\mathrm d\tau}.
\end{equation}
The Hessian $\mH$ moves along the gradient flow as well, giving us
\begin{equation}
\label{eq:gradient_flow_of_gradients_second_derivative_combined}
\frac{\mathrm d\mH}{\mathrm d\tau} \;=\; \nabla^3 L \left[\frac{\mathrm d\vtheta}{\mathrm d\tau}\right] \;=\; -\nabla^3 L [\vg].
\end{equation}
Then
\begin{equation}
\label{eq:gradient_flow_of_gradients_second_derivative2}
\frac{\mathrm d^2\vg}{\mathrm d\tau^2} \;=\; \mH^2 \vg + \nabla^3 L [\vg, \vg]
\;=\; -\frac{\mathrm d^3\vtheta}{\mathrm d\tau^3}.
\end{equation}
This gives us the Taylor expansion of the gradient gradient flow up to the second order term
\begin{equation}
\label{eq:gradient_flow_of_gradients_taylor_expansion_second_order}
\vg(\tau + h) \;=\; \vg(\tau) - h \mH(\tau) \vg(\tau) + \frac{h^2}{2} \big(\mH^2(\tau) \vg(\tau) + \nabla^3 L (\tau) [\vg(\tau), \vg(\tau)]\big) + O(h^3).
\end{equation}
Plugging $-jh$ instead of $h$ into the above equation, and let $\vg(\tau) = \vg_t$ and $\mH(\tau) = \mH_t$, we get
\begin{equation}
\label{eq:gradient_flow_of_gradients_taylor_expansion_second_order_backward}
\vg(\tau - jh) \;=\; \vg_t + j h \mH_t \vg_t + \frac{j^2 h^2}{2} \big(\mH^2_t \vg_t + \nabla^3 L_t [\vg_t, \vg_t]\big) + O(h^3).
\end{equation}

\paragraph{Aliasing in gradient gradient flow.}
Taking a simple gradient descent $\vtheta_{t+1} \leftarrow \vtheta_t - h \vg_t$, we get
\begin{equation}
\label{eq:gradient_descent_update_gradient_gradient_flow}
\vg_{t+1} \;=\; \nabla L(\vtheta_{t+1}) \;=\; \nabla L(\vtheta_t - h \vg_t) \;=\; \vg_t - h \mH_t \vg_t + \frac{h^2}{2} \nabla^3 L_t [\vg_t, \vg_t] + O(h^3),
\end{equation}
directly from the Taylor expansion of $L(\vtheta)$ around $\vtheta_t$.
Meanwhile, the Taylor expansion of the \emph{gradient gradient flow} in equation~(\ref{eq:gradient_flow_of_gradients_taylor_expansion_second_order}) just gave us
\begin{equation}
\label{eq:gradient_flow_of_gradients_taylor_expansion_second_order2}
\vg(\tau + h) \;=\; \vg_t - h \mH_t \vg_t + \frac{h^2}{2} \big({\color{red}\mH^2_t \vg_t} + \nabla^3 L_t [\vg_t, \vg_t]\big) + O(h^3),
\end{equation}
and thus
\begin{equation}
\label{eq:gradient_flow_of_gradients_taylor_expansion_second_order3}
\vg_{t+1} - \vg(\tau + h) \;=\; -\frac{h^2}{2} \mH^2_t \vg_t + O(h^3).
\end{equation}
Therefore, we cannot simply assume $\vg_{t+j} = \vg(\tau + jh)$ for any $j \ge 0$.
Comparing equation~(\ref{eq:gradient_flow_taylor_expansion_difference}) and equation~(\ref{eq:gradient_flow_of_gradients_taylor_expansion_second_order3}), we see that the truncation error in the gradient space has one more Hessian than that of the parameter space.
If we define the parameter update vector $\vu_t := (\vtheta_{t+1} - \vtheta_t)/h$ and define
\begin{equation}
\label{eq:gradient_flow_of_gradients_taylor_expansion_second_order4}
\vu_{t:t+j} \;:=\; \frac{\vtheta_{t+j} - \vtheta_t}{h} \;=\; \sum_{i=t}^{t+j-1} \vu_i,
\end{equation}
then we have the exact Taylor expansion of the gradient update $\vg_{t+j}$ as
\begin{equation}
\label{eq:gradient_flow_of_gradients_taylor_expansion_second_order5}
\vg_{t+j} \;=\; \vg_t - h \mH_t \vu_{t:t+j} + \frac{h^2}{2} \nabla^3 L_t [\vu_{t:t+j}, \vu_{t:t+j}] + O(h^3 \|\vu_{t:t+j}\|^3).
\end{equation}
However, this formulation complicates the analysis.
The problem is that $\vu_{t:t+j}$ is not a simple vector, but an aggregation of the sequence of parameter updates, which incorporates changes in mini-batch, evolution of local geometry along the algorithmic trace, filtering effect of the optimizer, and so on.

\paragraph{Quasi-dc expansion and filter effect.}
Fortunately, we can simplify the analysis by assuming the learning rate is sufficiently low, allowing us to use a quasi-dc expansion
\begin{equation}
\label{eq:quasi_dc_expansion}
\vu_{t+i} \;=\; \vg_t + O(h), \quad \text{and} \quad \vu_{t:t+j} \;=\; j\vg_t + O(h).
\end{equation}
Then
\begin{equation}
\label{eq:quasi_dc_expansion_aliasing_term}
\vg_{t-j} \;=\; \vg_t + jh \mH_t \vg_t + O(h^2).
\end{equation}
If we plug this into the convolution of the (normalized) filter $\bar Q * \vg_{\le t}$, we get
\begin{align}
\label{eq:quasi_dc_expansion_aliasing_term2}
(\bar Q * \vg_{\le t})_t &\;=\; \sum_{j\ge 0} \bar q_j \vg_{t-j} \\
&\;=\; \sum_{j\ge 0} \bar q_j ( \vg_t + jh \mH_t \vg_t) + O(h^2) \\
&\;=\; \Big(\sum_{j\ge 0} \bar q_j\Big) \vg_t + \Big(\sum_{j\ge 1} j \bar q_j\Big) h \mH_t \vg_t + O(h^2) \\
&\;=\; \vg_t - h \frac{Q'(1)}{Q(1)} \mH_t \vg_t + O(h^2).
\end{align}
The last line follows from the fact that $Q'(1) = - \sum_{j\ge 0} j q_j = - Q(1) \sum_{j\ge 0} j \bar q_j$.
Therefore,
\begin{equation}
\label{eq:quasi_dc_expansion_aliasing_term3}
\vu_{t} \;=\; (\bar Q * \vg_{\le t})_t \;=\; \vg_t - h \frac{Q'(1)}{Q(1)} \mH_t \vg_t + O(h^2).
\end{equation}
This deserves a couple of theorems.

\begin{theorem}[Filtered gradient descent.]
\label{thm:quasi_dc_expansion_aliasing_term}
For any causal filter $Q$ with an impulse response $q_j$, the filtered gradient descent updates the parameter by
\begin{equation}
\label{eq:quasi_dc_expansion_aliasing_term4}
\vtheta_{t+1} \;\leftarrow\; \vtheta_t - h\vg_t + h^2 \frac{Q'(1)}{Q(1)} \mH_t \vg_t + O(h^3).
\end{equation}
\end{theorem}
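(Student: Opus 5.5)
The plan is to start from the normalized form of the update, equation~(\ref{eq:filtered_update_normalized}), $\vtheta_{t+1} = \vtheta_t - h(\bar Q * \vg_{\le t})_t$. The claim then reduces to showing that the filtered gradient equals $\vg_t - h\frac{Q'(1)}{Q(1)}\mH_t\vg_t + O(h^2)$. Multiplying this by $h$ turns the $O(h^2)$ remainder into $O(h^3)$ and produces the sign pattern in the statement. This is equation~(\ref{eq:quasi_dc_expansion_aliasing_term3}), so the work is to make each step leading to it precise.

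\textbf{Step 1: backward expansion of past gradients.} For each lag $j$, Taylor-expand $\vg_{t-j} = \nabla L(\vtheta_{t-j})$ around $\vtheta_t$. The parameter displacement is $\vtheta_{t-j} - \vtheta_t = h\,\vu_{t-j:t}$, so
\[
\vg_{t-j} = \vg_t + h\mH_t \vu_{t-j:t} + O(h^2\|\vu_{t-j:t}\|^2),
\]
which is the backward counterpart of equation~(\ref{eq:gradient_flow_of_gradients_taylor_expansion_second_order5}). Under the quasi-dc hypothesis (\ref{eq:quasi_dc_expansion}), we have $\vu_{t-j:t} = j\vg_t + O(jh)$. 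Substituting gives equation~(\ref{eq:quasi_dc_expansion_aliasing_term}), namely $\vg_{t-j} = \vg_t + jh\mH_t\vg_t + R_j$. The remainder $R_j$ is of order $O(j^2h^2)$ once the lag dependence is tracked, assuming $\nabla^3 L$ and the gradients are bounded near the trajectory.

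\textbf{Step 2: filtering and identifying $Q'(1)$.} Sum the expansion against the normalized taps $\bar q_j$. Use $\sum_j \bar q_j = \bar Q(1) = 1$, and use $\sum_j j\bar q_j = -Q'(1)/Q(1)$, which follows by differentiating $Q(z) = \sum_j q_j z^{-j}$ termwise at $z=1$. This yields the displayed computation (\ref{eq:quasi_dc_expansion_aliasing_term2}) with total remainder $\sum_j \bar q_j R_j$. Plugging the result back into the update finishes the proof.

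\textbf{Main obstacle: uniform remainder control.} The delicate step is making the $O(h^2)$ remainder uniform over the infinite sum, since the lag-dependent errors grow like $j^2h^2$. I would assume the filter is stable with finite second moment $\sum_j j^2|q_j| < \infty$. For the rational filters considered (SGDM, SGDN, QHM, Grokfast) this holds because the poles lie strictly inside the unit circle, so the taps decay geometrically. Under this assumption $\sum_j |\bar q_j|\,\|R_j\| = O(h^2)$. A second point to handle is the self-consistency of the quasi-dc assumption. The bound $\vu_{t+i} = \vg_t + O(h)$ itself follows from the same expansion at lower order, so I would close it either by a bootstrap/induction over lags or by simply stating it as a standing hypothesis, as the paper does. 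This is also where stochasticity would need to be excluded or averaged, so I would state the result for a fixed smooth deterministic objective, as announced at the start of the section.
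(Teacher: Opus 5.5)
Your proposal is correct and follows the paper's route. The paper's proof is just the substitution of equation~(\ref{eq:quasi_dc_expansion_aliasing_term3}) into $\vtheta_{t+1} = \vtheta_t - h\vu_t$, and that equation is obtained exactly as in your Steps 1--2: the quasi-dc backward expansion $\vg_{t-j} = \vg_t + jh\mH_t\vg_t + O(h^2)$, then summing against $\bar q_j$ using $\bar Q(1)=1$ and $\sum_j j\bar q_j = -Q'(1)/Q(1)$.

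Your uniform remainder control via $\sum_j j^2|q_j|<\infty$ is a sound tightening that the paper leaves implicit. One correction: when you track lags, the error in $\vu_{t-j:t}$ is of order $j^2h$ rather than $jh$, because each $\vu_{t-i}$ deviates from $\vg_t$ by $O((i+1)h)$. This still gives your bound $R_j = O(j^2h^2)$.
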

\begin{proof}
From $\vtheta_{t+1} \leftarrow \vtheta_t - h\vu_t$ and equation~(\ref{eq:quasi_dc_expansion_aliasing_term3}), we get the result.
\end{proof}

\begin{theorem}[Augmented gradient flow.]
\label{thm:quasi_dc_expansion_aliasing_term_gradient}
Given a causal filter $Q$ with an impulse response $q_j$, the filtered gradient descent corresponds to a continuous-time gradient flow of the modified differential equation
\begin{equation}\tag{{\color{red}$\heartsuit$}}
\label{eq:modified_ode}
\renewcommand{\boxed}[1]{\colorbox{lightgray!15}{\ensuremath{#1}}}
\boxed{\ %
\displaystyle
\frac{\mathrm d\vtheta}{\mathrm d\tau} \;=\; -\vg - h \alpha_Q \mH \vg,
}
\end{equation}
where
\begin{equation}
\label{eq:alpha_Q}
\alpha_Q \;:=\; \frac{1}{2} - \frac{Q'(1)}{Q(1)},
\end{equation}
up to $O(h^2)$.
\end{theorem}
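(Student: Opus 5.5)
The plan is a backward-error (modified-equation) argument: find a modified vector field whose exact time-$h$ flow reproduces the one-step map of Theorem~\ref{thm:quasi_dc_expansion_aliasing_term} up to $O(h^3)$. We take the ansatz
\begin{equation*}
\frac{\mathrm d\vtheta}{\mathrm d\tau} \;=\; \vv(\vtheta) \;:=\; -\vg(\vtheta) + h\,\vv_1(\vtheta),
\end{equation*}
with $\vv_1$ an unknown smooth field, and match coefficients of $h$ in the one-step map.

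First, Taylor-expand the exact flow of this ODE over a time interval $h$ starting from $\vtheta_t$:
\begin{equation*}
\vtheta(\tau+h) \;=\; \vtheta_t + h\vv + \frac{h^2}{2}(\nabla\vv)\vv + O(h^3).
\end{equation*}
Since $\nabla\vv = -\mH + O(h)$, the second-order term is $\frac{h^2}{2}\mH_t\vg_t + O(h^3)$. This is exactly the computation behind equation~(\ref{eq:gradient_flow_taylor_expansion}), now with the perturbation $h\vv_1$ carried along. Collecting terms gives
\begin{equation*}
\vtheta(\tau+h) \;=\; \vtheta_t - h\vg_t + h^2\Big(\vv_1 + \tfrac12\mH_t\vg_t\Big) + O(h^3).
\end{equation*}

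Second, equate this with the filtered descent step of Theorem~\ref{thm:quasi_dc_expansion_aliasing_term}, $\vtheta_t - h\vg_t + h^2\frac{Q'(1)}{Q(1)}\mH_t\vg_t + O(h^3)$. The $O(h)$ terms already agree. Matching the $h^2$ coefficients forces
\begin{equation*}
\vv_1 \;=\; \Big(\frac{Q'(1)}{Q(1)} - \frac12\Big)\mH\vg \;=\; -\alpha_Q\,\mH\vg,
\end{equation*}
which is exactly (\ref{eq:modified_ode}). One can sanity-check the sign with plain SGD: there $Q'(1)=0$, so $\alpha_Q=\tfrac12$, recovering the standard implicit-regularization term $-\tfrac h2\mH\vg$ of the backward-error analyses cited in the related work.

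The main obstacle is justifying the $O(h^3)$ remainder in Theorem~\ref{thm:quasi_dc_expansion_aliasing_term}. It depends on the quasi-dc assumption (\ref{eq:quasi_dc_expansion}), namely that past gradients satisfy $\vg_{t-j} = \vg_t + jh\mH_t\vg_t + O(h^2)$ uniformly in $j$. For an infinite impulse response this needs the moment condition $\sum_j j^2|\bar q_j| < \infty$ (true for any stable rational filter), so that the accumulated $O(j^2h^2)$ errors sum to $O(h^2)$. It also needs the trajectory to have stayed in the slowly-varying, low-LR regime over the filter's effective memory. I would state both as standing hypotheses, since the statement is explicitly local and small-$h$, and otherwise take Theorem~\ref{thm:quasi_dc_expansion_aliasing_term} as given. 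Matching one step to $O(h^3)$ yields the claimed $O(h^2)$ accuracy of the modified vector field.
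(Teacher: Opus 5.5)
Your proposal is correct and follows essentially the paper's argument. Both Taylor-expand the time-$h$ flow of the modified ODE, which contributes the $\frac{h^2}{2}\mH\vg$ term, and then match the $h^2$ coefficient against Theorem~\ref{thm:quasi_dc_expansion_aliasing_term}. The differences are minor: you solve for an unknown correction field $\vv_1$ instead of verifying the stated $\alpha_Q$, and you state explicitly the hypotheses that the paper's quasi-dc expansion leaves implicit (a moment condition on $\bar q_j$ and a slowly varying past trajectory).
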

\begin{proof}
From the Taylor expansion of equation~(\ref{eq:modified_ode}) to $O(h^2)$, we get
\begin{equation}
\label{eq:modified_ode_taylor_expansion}
\vtheta_{t+1} = \vtheta_t - h\vg_t - h^2 \alpha_Q \mH_t \vg_t + \frac{h^2}{2} \mH_t \vg_t + O(h^3),
\end{equation}
where the last term comes from expanding $\vg(\vtheta_t - h\vg_t) \approx \vg_t - h\mH_t\vg_t$.

Combining like $h^2$ terms, the update becomes
\begin{equation}
\label{eq:modified_ode_taylor_expansion2}
\vtheta_{t+1} = \vtheta_t - h\vg_t + h^2 \left(\frac{1}{2} - \alpha_Q\right) \mH_t \vg_t + O(h^3).
\end{equation}
Comparing this to equation~(\ref{eq:quasi_dc_expansion_aliasing_term4}), we match coefficients and require
\begin{equation}
\label{eq:modified_ode_taylor_expansion3}
\alpha_Q = \frac{1}{2} - \frac{Q'(1)}{Q(1)}.
\end{equation}
Thus, the filtered gradient descent update is matched by the ODE~(\ref{eq:modified_ode}) up to $O(h^2)$, with $\alpha_Q$ as defined in~(\ref{eq:alpha_Q}).
\end{proof}

We can likewise compare the gradient trajectories from the filtered gradient descent with that of the augmented gradient flow.
By applying Theorem~\ref{thm:quasi_dc_expansion_aliasing_term} $j$ times with a fixed number of steps $j$, we get
\begin{align}
\label{eq:gradient_descent_update_gradient_gradient_flow_j_taylor_expansion}
\vu_{t+i} &\;=\; (\bar Q * \vg_{\le t+i})_{t+i} \\
&\;=\; \sum_{l\ge 0} \bar q_l \vg_{t+i-l} \\
&\;=\; \sum_{l\ge 0} \bar q_l \big(\vg_t + (l-i) h \mH_t \vg_t\big) + O(h^2) \\
&\;=\; \vg_t - h \left(\frac{Q'(1)}{Q(1)} + i\right) \mH_t \vg_t + O(h^2).
\end{align}
Summing over $i$ from $0$ to $j-1$, we get
\begin{equation}
\label{eq:gradient_descent_update_gradient_gradient_flow_j}
\vu_{t:t+j} \;=\; \sum_{i=0}^{j-1} \vu_{t+i} \;=\; j \vg_t - h \left(j\frac{Q'(1)}{Q(1)} + \frac{j(j-1)}{2}\right) \mH_t \vg_t + O(h^2).
\end{equation}
Plugging this into equation~(\ref{eq:gradient_flow_of_gradients_taylor_expansion_second_order5}), we get
\begin{equation}
\label{eq:gradient_descent_update_gradient_gradient_flow_j_taylor_expansion2}
\vg_{t+j} \;=\; \vg_t - jh \mH_t \vg_t + h^2 \left[
\left(j\frac{Q'(1)}{Q(1)} + \frac{j(j-1)}{2}\right) \mH_t^2 \vg_t +
\frac{j^2}{2} \nabla^3 L_t [\vg_t, \vg_t]\right] + O(h^3).
\end{equation}
On the other hand, plugging $jh$ in the exact Taylor expansion of $\vg(\tau)$ gives
\begin{equation}
\label{eq:gradient_descent_update_gradient_gradient_flow_j_taylor_expansion3}
\vg(\tau + jh) \;=\; \vg_t - jh \mH_t \vg_t + \frac{j^2 h^2}{2} \Big(
\mH_t^2 \vg_t + \nabla^3 L_t [\vg_t, \vg_t]\Big) + O(h^3).
\end{equation}
Comparing these two equations, we see that they match up to $O(h^2)$.

\begin{corollary}[Gradient consistency.]
\label{cor:gradient_descent_update_gradient_gradient_flow_j}
Gradients from the continuous-time gradient flow $\mathrm d\vg / \mathrm d\tau = -\mH \vg$ and those from the filtered gradient descent of equation~(\ref{eq:quasi_dc_expansion_aliasing_term4}) agree up to $O(h^2)$ following
\begin{equation}
\label{eq:gradient_descent_update_gradient_gradient_flow_j_corollary}
\vg_{t+j} - \vg(\tau + jh) \;=\; -jh^2 \alpha_Q \mH_t^2 \vg_t + O(h^3).
\end{equation}
\end{corollary}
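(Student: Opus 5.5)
The plan is to subtract the two second-order expansions derived just before the statement: the filtered-descent expansion (\ref{eq:gradient_descent_update_gradient_gradient_flow_j_taylor_expansion2}) and the exact gradient-flow expansion (\ref{eq:gradient_descent_update_gradient_gradient_flow_j_taylor_expansion3}). Then we collect coefficients order by order in $h$. Throughout, $j$ is a fixed number of steps, $L$ is smooth near $\vtheta_t$, and $Q$ is a stable filter. This makes $\sum_l l\,\bar q_l = -Q'(1)/Q(1)$ finite, so all $O(\cdot)$ constants may depend on $j$ and $Q$ but not on $h$.

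\textbf{Step 1: re-derive the descent-side expansion.} Start from the exact Taylor identity (\ref{eq:gradient_flow_of_gradients_taylor_expansion_second_order5}). Substitute the accumulated update (\ref{eq:gradient_descent_update_gradient_gradient_flow_j}), which gives
\[
\vu_{t:t+j} = j\vg_t - h\Big(j\tfrac{Q'(1)}{Q(1)} + \tfrac{j(j-1)}{2}\Big)\mH_t\vg_t + O(h^2).
\]
In the linear term $-h\mH_t\vu_{t:t+j}$, the $O(h)$ correction of $\vu_{t:t+j}$ produces the $h^2\mH_t^2\vg_t$ coefficient, and the $O(h^2)$ remainder becomes $O(h^3)$. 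In the cubic-tensor term $\tfrac{h^2}{2}\nabla^3 L_t[\vu_{t:t+j},\vu_{t:t+j}]$, only the leading part $j\vg_t$ matters, giving $\tfrac{j^2h^2}{2}\nabla^3L_t[\vg_t,\vg_t]+O(h^3)$. Since $\|\vu_{t:t+j}\| = O(1)$, the Taylor remainder is $O(h^3)$.

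\textbf{Step 2: subtract.} In the difference, the $\vg_t$ terms cancel, and so do the first-order terms $-jh\mH_t\vg_t$. The third-derivative terms $\tfrac{j^2h^2}{2}\nabla^3L_t[\vg_t,\vg_t]$ also cancel identically. What remains is the $h^2\mH_t^2\vg_t$ coefficient:
\[
j\frac{Q'(1)}{Q(1)} + \frac{j(j-1)}{2} - \frac{j^2}{2} \;=\; j\Big(\frac{Q'(1)}{Q(1)} - \frac12\Big) \;=\; -j\,\alpha_Q,
\]
using the definition (\ref{eq:alpha_Q}). This yields exactly (\ref{eq:gradient_descent_update_gradient_gradient_flow_j_corollary}). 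As a sanity check, for plain SGD ($Q'(1)=0$, $j=1$) this reproduces (\ref{eq:gradient_flow_of_gradients_taylor_expansion_second_order3}).

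\textbf{Main obstacle.} The delicate step is justifying the $O(h^2)$ accuracy of $\vu_{t+i}$ inside (\ref{eq:gradient_descent_update_gradient_gradient_flow_j}). It rests on the quasi-dc expansion (\ref{eq:quasi_dc_expansion_aliasing_term}), applied to past gradients $\vg_{t+i-l}$ for arbitrarily large lags $l$. There, the error $O(l^2h^2)$ must be summable against $\bar q_l$. I would handle this by assuming the impulse response decays geometrically, as it does for all the pole-zero filters considered. Then $\sum_l l^2|\bar q_l| < \infty$, the tail contributes only $O(h^2)$, and the Step 1 bookkeeping goes through uniformly. This assumption will be stated explicitly, in the same spirit as the paper's standing small-$h$, fixed-filter convention.
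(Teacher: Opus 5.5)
Your proposal is correct and takes the same approach as the paper. The paper's proof simply subtracts the flow expansion (\ref{eq:gradient_descent_update_gradient_gradient_flow_j_taylor_expansion3}) from the descent expansion (\ref{eq:gradient_descent_update_gradient_gradient_flow_j_taylor_expansion2}), and your collapse $j\frac{Q'(1)}{Q(1)}+\frac{j(j-1)}{2}-\frac{j^2}{2}=-j\alpha_Q$ is exactly that cancellation; your added geometric-decay assumption on $\bar q_l$ (giving $\sum_l l^2|\bar q_l|<\infty$) makes explicit a hypothesis the paper leaves implicit.
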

\begin{proof}
Subtracting equation~(\ref{eq:gradient_descent_update_gradient_gradient_flow_j_taylor_expansion3}) from equation~(\ref{eq:gradient_descent_update_gradient_gradient_flow_j_taylor_expansion2}) obtains the result.
\end{proof}

\paragraph{Local geometric responses in the low-LR regime.}
Once the accurate approximations for the parameter and gradient propagation are established, approximating the local geometric responses is straightforward.
From the chain rule,
\begin{equation}
\label{eq:local_loss_taylor}
\frac{\mathrm dL}{\mathrm d\tau} \;=\; \nabla L^\top \frac{\mathrm d\vtheta}{\mathrm d\tau},
\end{equation}
Let $S(\vtheta)=\lambda_{\max}(\mH(\vtheta))$. Likewise,
\begin{equation}
\label{eq:local_sharpness_taylor}
\frac{\mathrm dS}{\mathrm d\tau} \;=\; \nabla S^\top \frac{\mathrm d\vtheta}{\mathrm d\tau},
\end{equation}
where
\begin{equation}
\label{eq:local_sharpness_taylor2}
\nabla S \;=\; \nabla^3 \!L[\vv_{\max}, \vv_{\max}].
\end{equation}
Here, $\vv_{\max}$ is the maximum eigenvector of the Hessian $\mH$.
From the observations in the main text, we see that in the low-LR regime, the geometric responses are almost identical, implying that the traces follow the same, continuout-time gradient flow in the progress coordinate $\tau = h t$.
Mathematically, this corresponds to $\mathrm d\vtheta / \mathrm d\tau = -\vg$.
\begin{align}
\label{eq:local_loss_taylor_vg}
\frac{\mathrm dL}{\mathrm d\tau} &\;=\; -\nabla L^\top\vg \;=\; -\|\vg\|^2, \\
\label{eq:local_sharpness_taylor_vg}
\frac{\mathrm dS}{\mathrm d\tau} &\;=\; -\nabla S^\top\vg \;=\; -\nabla^3\!L[\vv_{\max}, \vv_{\max}, \vg].
\end{align}

\paragraph{First order alias $\rho_t$.}
The first order alias $\rho_t$ defined in Section~\ref{sec:preliminary} actually was a scaled normalized gradient difference,
\begin{equation}
\label{eq:consecutive_gradient_alignment_rewrite}
\rho_t \;:=\; |\alpha_Q| \frac{\|\vg_{t+1} - \vg_{t}\|}{\|\vg_t\|}.
\end{equation}
We now prove that $\rho_t$ \emph{is} the \emph{first order alias}, the magnitude of the higher order aliasing terms normalized by the norm of the ideal first order gradient flow $\dot \vtheta_t = -\vg_t$.

\begin{proposition}[First order alias.]
\label{prop:first_order_alias}
Assume a parameter update is made by the filtered gradient descent $\vtheta_{t+1} \leftarrow \vtheta_t - \eta (Q * \vg_{\le t})_t$.
By Theorem~\ref{thm:quasi_dc_expansion_aliasing_term_gradient}, a continuous-time gradient flow defined by
\begin{equation}
\label{eq:first_order_alias_corollary}
\dot {\vtheta}_t^{\textnormal{equiv}} \;=\; -\vg_t - h \alpha_Q \mH_t \vg_t,
\end{equation}
matches this discrete-time update up to $O(h^2)$.
Then $\rho_t$ in equation~(\ref{eq:consecutive_gradient_alignment_rewrite}) is equal to the normalized difference between this augmented flow and the ideal gradient flow $\dot \vtheta_t = -\vg$ defined with respect to the progress coordinate $\tau = ht = t \eta Q(1)$, up to $O(h^2)$.
\begin{equation}
\label{eq:first_order_alias_proposition}
\rho_t \;=\; \frac{\|\dot {\vtheta}_t^{\textnormal{equiv}} - \dot \vtheta_t\|}{\|\dot \vtheta_t\|} + O(h^2).
\end{equation}
\end{proposition}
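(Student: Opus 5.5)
The plan is to compute both sides of the claimed identity to leading order in $h$ and check that they agree. The right-hand side is essentially immediate from the statement. Subtracting the ideal flow $\dot\vtheta_t = -\vg_t$ from the augmented flow of Theorem~\ref{thm:quasi_dc_expansion_aliasing_term_gradient} gives $\dot\vtheta_t^{\textnormal{equiv}} - \dot\vtheta_t = -h\alpha_Q \mH_t\vg_t$. Hence
\begin{equation*}
\frac{\|\dot\vtheta_t^{\textnormal{equiv}} - \dot\vtheta_t\|}{\|\dot\vtheta_t\|} \;=\; h|\alpha_Q|\,\frac{\|\mH_t\vg_t\|}{\|\vg_t\|}.
\end{equation*}
This holds exactly within the truncated ODE. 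Since the ODE itself is only valid up to $O(h^2)$, the right-hand side is determined up to $O(h^2)$ as well.

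For the left-hand side, I will expand $\vg_{t+1}-\vg_t$ using the machinery already set up. The cleanest route is to take $j=1$ in equation~(\ref{eq:gradient_descent_update_gradient_gradient_flow_j_taylor_expansion2}), or equivalently to use equation~(\ref{eq:gradient_flow_of_gradients_taylor_expansion_second_order5}) together with the quasi-dc expansion $\vu_{t:t+1} = \vu_t = \vg_t + O(h)$ from equation~(\ref{eq:quasi_dc_expansion_aliasing_term3}). Either way this yields
\begin{equation*}
\vg_{t+1}-\vg_t \;=\; -h\mH_t\vg_t + O(h^2).
\end{equation*}
Taking norms, I will use the reverse triangle inequality $\bigl|\|a+b\|-\|a\|\bigr|\le\|b\|$ to conclude $\|\vg_{t+1}-\vg_t\| = h\|\mH_t\vg_t\| + O(h^2)$. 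Multiplying by $|\alpha_Q|/\|\vg_t\|$ then gives $\rho_t = h|\alpha_Q|\,\|\mH_t\vg_t\|/\|\vg_t\| + O(h^2)$, which matches the expression for the right-hand side.

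The main obstacle is justifying the error terms uniformly. The $O(h^2)$ remainder in the gradient difference hides contributions from $\nabla^3 L$, from $\mH_t^2\vg_t$, and from the filter memory $Q'(1)/Q(1)$. Dividing by $\|\vg_t\|$ keeps this remainder $O(h^2)$ only if those quantities are bounded relative to $\|\vg_t\|$. I will therefore state explicitly the standing assumptions of the appendix: $\vg_t\neq 0$, a smooth objective with bounded derivatives at the anchored iterate, and a stable filter with $\sum_j j|q_j|<\infty$. Under these, the quasi-dc expansion holds with constants independent of $h$.

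I will also note the interpretation. $\rho_t$ is a first-order quantity in $h$, while the mismatch between it and the exact normalized difference is second order. This is what makes $\rho_t$ a leading indicator of aliasing, in contrast to the $O(h^2)$ behaviour of $c_t$ discussed later.
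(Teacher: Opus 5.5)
Your proposal is correct and follows essentially the same route as the paper. You expand $\vg_{t+1}-\vg_t = -h\mH_t\vg_t + O(h^2)$, substitute it into the definition of $\rho_t$, and identify $h|\alpha_Q|\,\|\mH_t\vg_t\|/\|\vg_t\|$ as the normalized gap between the augmented flow and the ideal flow $-\vg_t$; your reverse-triangle-inequality step and the explicit standing assumptions ($\vg_t\neq 0$, bounded derivatives, a stable filter) spell out what the paper's one-line substitution leaves implicit.
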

\begin{proof}
From the difference between the consecutive gradients
\begin{equation}
\label{eq:consecutive_gradient_alignment_rewrite1}
\Delta \vg_t \;:=\; \vg_{t+1} - \vg_{t} \;=\; -h \mH_t \vg_t + O(h^2).
\end{equation}
Plugging this into the definition of $\rho_t$, we get
\begin{equation}
\label{eq:consecutive_gradient_alignment_rewrite2}
\rho_t \;=\; |\alpha_Q| \frac{\|{-h \mH_t \vg_t} + O(h^2)\|}{\|\vg_t\|}
\;=\; \frac{\|(-\vg_t - h \alpha_Q \mH_t \vg_t + O(h^2)) - (-\vg_t)\|}{\|{-\vg_t}\|}.
\end{equation}
Substituting the definitions of the augmented flow and the ideal gradient flow returns the result.
\end{proof}

From equation~(\ref{eq:consecutive_gradient_alignment_rewrite2}), we further get
\begin{equation}
\label{eq:consecutive_gradient_alignment_rewrite3}
\rho_t \;=\; h |\alpha_Q| \frac{\|\mH_t \vg_t\|}{\|\vg_t\|} + O(h^2) \;=\; O(h).
\end{equation}
That is, the first order alias $\rho_t$ is a \emph{first-order observable of the aliasing effect} of the equivalent continuous-time gradient flow.
Figure~\ref{fig:main_results}e shows the first order alias $\rho_t$ as a function of the timescale $h = \eta Q(1)$, which is approximately linear in the low-LR and mid-LR regimes, until the learning system starts to approach the edge of stability where the first order approximation breaks down.

\paragraph{Why consecutive gradient alignment $c_t$ is a lagging indicator.}
In Section~\ref{sec:preliminary}, we also defined the cosine of the consecutive gradients as a measure of gradient alignment,
\begin{equation}
\label{eq:ct_cosine}
c_t\;:=\;\frac{\langle \vg_t, \vg_{t+1} \rangle}{\|\vg_t\| \|\vg_{t+1}\|}.
\end{equation}
We now prove that $c_t$ is a \emph{second-order observable of the aliasing effect}.

\begin{proposition}[Second order alias.]
\label{prop:second_order_alias}
Define $c_t$ as in equation~(\ref{eq:ct_cosine}), and $\rho_t$ as in equation~(\ref{eq:consecutive_gradient_alignment_rewrite}).
For a small drift in gradient $\Delta \vg_t = \vg_{t+1} - \vg_t$, we have
\begin{equation}
\label{eq:second_order_alias_proposition}
1 - c_t \;=\; O(\rho_t^2).
\end{equation}
\end{proposition}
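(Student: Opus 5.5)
The plan is to write $\vg_{t+1} = \vg_t + \Delta\vg_t$ and expand the cosine exactly, without any Taylor expansion of the dynamics at first. Write $\vg := \vg_t$ and $\vd := \Delta \vg_t$, and split $\vd = \vd_\parallel + \vd_\perp$ into the components parallel and orthogonal to $\vg$, with $\vd_\parallel = a\,\vg$ for some scalar $a$. Then
\begin{equation*}
c_t \;=\; \frac{(1+a)\|\vg\|}{\sqrt{(1+a)^2\|\vg\|^2 + \|\vd_\perp\|^2}} \;=\; \Big(1 + \frac{\|\vd_\perp\|^2}{(1+a)^2\|\vg\|^2}\Big)^{-1/2},
\end{equation*}
valid whenever $1+a>0$. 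This holds for small drift, since $|a| \le \|\vd\|/\|\vg\|$.

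The key observation is that the parallel component drops out of the angle entirely. Only the perpendicular component enters, and it enters quadratically. Using $1-(1+x)^{-1/2} \le x/2$ for $x \ge 0$, I get
\begin{equation*}
0 \;\le\; 1 - c_t \;\le\; \frac{1}{2}\,\frac{\|\vd_\perp\|^2}{(1+a)^2 \|\vg\|^2} \;\le\; \frac{1}{2(1-\epsilon)^2}\,\frac{\|\vd\|^2}{\|\vg\|^2},
\end{equation*}
where $\epsilon := \|\vd\|/\|\vg\| < 1$. Finally, by the definition in equation~(\ref{eq:consecutive_gradient_alignment_rewrite}), $\|\vd\|/\|\vg\| = \rho_t/|\alpha_Q|$. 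This gives $1-c_t \le \rho_t^2/\big(2\alpha_Q^2(1-\epsilon)^2\big) = O(\rho_t^2)$, with a constant that depends only on the filter through $\alpha_Q$.

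To connect with the paper's narrative, I would then invoke equation~(\ref{eq:consecutive_gradient_alignment_rewrite1}), namely $\vd = -h\mH_t\vg_t + O(h^2)$, together with equation~(\ref{eq:consecutive_gradient_alignment_rewrite3}), which gives $\rho_t = O(h)$. From these I would conclude $1-c_t = O(h^2)$ with leading coefficient $\tfrac{h^2}{2}\big(\|\mH_t\vg_t\|^2/\|\vg_t\|^2 - (\vg_t^\top\mH_t\vg_t)^2/\|\vg_t\|^4\big)$. This is the variance of the Rayleigh quotient of $\mH_t$ along the gradient direction. It confirms that the first nonzero term is second order, so $c_t$ lags $\rho_t$.

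The main obstacle is the degenerate case $\alpha_Q = 0$, i.e.\ $Q'(1)/Q(1) = 1/2$. There $\rho_t \equiv 0$, while $\vd$ need not vanish, so the $O(\rho_t^2)$ statement fails literally. I would handle this by stating the bound with $\|\vd\|/\|\vg\|$ as the primary quantity, and by assuming $\alpha_Q \neq 0$ (absorbing $1/\alpha_Q^2$ into the constant) to express the result in terms of $\rho_t$. The smallness assumption $\epsilon<1$ is what keeps $1+a$ bounded away from zero.
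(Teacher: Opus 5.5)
Your argument is correct and is essentially the paper's. The paper also splits $\Delta\vg_t$ with the projector $\Pi^{\perp}_{\vg_t}$ and shows that only the perpendicular part enters $1-c_t$, at second order. The paper states this as a Taylor expansion with an $O(\|\Delta\vg_t\|^3/\|\vg_t\|^3)$ remainder, whereas your exact identity and the explicit bound $1-c_t\le \rho_t^2/\bigl(2\alpha_Q^2(1-\epsilon)^2\bigr)$ make it non-asymptotic. One small correction: the parallel part does not drop out entirely. It rescales the perpendicular term by $(1+a)^{-2}$, which your $(1-\epsilon)^{-2}$ factor correctly absorbs.

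Your caveat $\alpha_Q\neq 0$ is genuinely needed. For example, the filter $Q(z)=\tfrac32-\tfrac12 z^{-1}$ (Adams--Bashforth type) has $\alpha_Q=0$, so $\rho_t\equiv 0$ while generically $1-c_t>0$. The paper assumes $\alpha_Q\neq 0$ implicitly when it divides by $\alpha_Q^2$ in the follow-up identity $1-c_t=\tfrac{\rho_t^2}{2\alpha_Q^2}\sin^2\psi_t+O(\rho_t^3)$.
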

\begin{proof}
Define a projection operator $\Pi_{\vg_t}^{\perp}$ as
\begin{equation}
\label{eq:second_order_alias_proof1}
\Pi_{\vg_t}^{\perp} \;=\; I - \frac{\vg_t \vg_t^\top}{\|\vg_t\|^2},
\end{equation}
that normalizes a vector to be perpendicular to $\vg_t$.
Then the cosine $c_t$ can be expanded as
\begin{equation}
\label{eq:second_order_alias_proof2}
c_t \;=\; 1 - \frac{\left\| \Pi_{\vg_t}^{\perp} \Delta \vg_t \right\|^2}{2\|\vg_t\|^2} + O\left(\frac{\|\Delta \vg_t\|^3}{\|\vg_t\|^3}\right).
\end{equation}
This completes the proof.
\end{proof}

Moreover, since $\Pi_{\vg_t}^{\perp}$ is a projection operator, we can define a sine function of the gradient drift as
\begin{equation}
\label{eq:second_order_alias_proof3}
\sin \psi_t \;:=\; \frac{\left\| \Pi_{\vg_t}^{\perp} \Delta \vg_t \right\|}{\|\Delta \vg_t\|}.
\end{equation}
Then $c_t$ can be rewritten as
\begin{equation}
\label{eq:second_order_alias_proof4}
1 - c_t \;=\; \frac{\rho_t^2}{2 \alpha_Q^2}\sin^2 \psi_t + O(\rho_t^3).
\end{equation}
Immediately, we can see that $1 - c_t$ is a second-order observable of the aliasing effect.
\begin{equation}
\label{eq:second_order_alias_proof5}
\rho_t \;=\; O(h), \qquad 1 - c_t \;=\; O(h^2).
\end{equation}
Therefore, the cosine $c_t$ is a lagging indicator of the aliasing effect, compared to the alias $\rho_t$.
In the main text, by comparing Figure~\ref{fig:main_results}d and Figure~\ref{fig:main_results}e, we can observe a clear difference between the two measurements: 
the alias $\rho_t$ gradually increases linearly from insignificant to significant values according to the effective learning rate $h$ throughout the low-LR and mid-LR regimes, while the cosine $c_t$ remains near 1 until the learning system closely approaches the edge of stability.
Only then $c_t$ suddenly breaks down.
The edge of stability emerges as a pure discrete-time phenomenon after the first and the second order aliasing become significant.

\label{sec:proofs}

\section{Full Experimental Results}
We finally provide the full tables and figures for the experiments summarized in the main text.

\subsection{Optimizer Specification}
\label{sec:optimizer_specification}

This section states the hyperparameters and characteristic quantities of the 10 optimizers we used.
First, Table~\ref{tab:filter_parameters} gives the full specification of the optimizers.
In specific, we use one SGD~\citep{robbins1951stochastic}, two SGD with Heavy Ball (SGDM)~\citep{polyak1964some}, two SGD with Nesterov's Acceleration (SGDN)~\citep{nesterov1983method}, two QHM~\citep{ma2018quasi} and three GF~\citep{lee2024grokfast} optimizers.
We vary the momentum hyperparameter $\beta$ for SGDM and SGDN, the zero position $\nu$ for QHM, and the root locus escape angle $\theta$ for GF.
The chosen hyperparameters are denoted in their name suffixes so that we can easily identify them in the tables and figures.
All optimizers are based on stochastic mini-batch gradient descent with a fixed batch size of 4096.
This batch size is chosen to be large enough to be inside the reported \emph{large batch regime} of \citet{andreyev2026momentum} above the critical batch size of CIFAR-10 task~\citep{krizhevsky2009learning}.

Each column of Table~\ref{tab:filter_parameters} is a derived quantity from the optimizer transfer function $Q(z)$ in equation~(\ref{eq:eigencomponent_transfer_function_normalized}): $z - 1 + k \bar Q(z) = 0$ for some gain $k = h S = \eta Q(1) \lambda_{\max}(\mH)$, its first derivative $Q'(z)$, and its root locus.
$Q(1)$ is the dc gain and $Q'(1)$ is the first order Taylor coefficient of $Q(z)$ around the dc point $z=1$.
The quantity $\alpha_Q=\frac12-\frac{Q'(1)}{Q(1)}$ is the second order Taylor coefficient of the filtered gradient descent, used as a scaling factor in evaluating the first order alias $\rho_t$ in Section~\ref{sec:preliminary} in the main text and later used in the proof of Theorem~\ref{thm:quasi_dc_expansion_aliasing_term_gradient} in the previous section.
Define $k_\star$ as the value of coefficient $k$ in the characteristic equation $z - 1 + k \bar Q(z) = 0$ for which the first root reaches the unit circle.
Then the unit circle escape gain $\Gamma_Q$ of the optimizer $Q(z)$ is the reciprocal of $k_\star$: $\Gamma_Q = 2 / k_\star$.
Therefore, the sharpness $S$ at the edge of stability is given by
\begin{equation}
S \;=\; \frac{2}{h \Gamma_Q} \;=\; \frac{2}{\eta Q(1) \Gamma_Q} \;=\; \frac{k_\star}{\eta Q(1)}.
\end{equation}
This prediction is empirically justified in the main text, e.g., in Figure~\ref{fig:main_results} and~\ref{fig:geometry_response_of_cnn_and_vit}.

The quantities $k_\star$ and $\Gamma_Q$ can be analytically derived from the normalized optimizer transfer function $\bar Q(z)$ as follows.
SGD, SGDM, SGDN, and QHM all have one pole.
Their transfer function $\bar Q(z)$ is of the form
\begin{equation}
\bar Q(z) \;=\; \frac{b_0 + b_1 z^{-1}}{1 - \beta z^{-1}}, \qquad k_\star \;=\; \frac{2(1 + \beta)}{b_0 - b_1},
\end{equation}
with its unit circle escape at the Nyquist frequency $z = -1$.
On the other hand, Grokfast with an cascade of two identical filters of momentum coefficient $\beta$ has two poles.
Its normalized transfer function $\bar Q(z)$ is of the form
\begin{equation}
\bar Q(z) \;=\; \frac{c}{(1 - \beta z^{-1})^2}, \qquad \cos \theta_\star = \frac{\beta^2 + 2\beta - 1}{2\beta^2}, \qquad k_\star \;=\; \frac{1 + 2 \beta - \beta^2 - 2 \cos \theta_\star}{c},
\end{equation}
with its unit circle escape at the angle $\theta_\star$.
All these values are summarized in Table~\ref{tab:filter_parameters}.

\subsection{Regime Boundaries}
\label{sec:regime_boundaries}

Table~\ref{tab:regime_boundaries} gives the regime boundaries and the number of runs fall into each category of the regimes for the MLP experiments.
Since all curves coincide in the low-LR and early mid-LR regimes in the effective learning rate $h$ coordinate, all of the 10 optimizers share the same low-LR/mid-LR boundaries $h_\text{low|mid} = 5\times 10^{-5}$.
As we showed in Figure~\ref{fig:main_results}a in the main manuscript, the curves start to deviate from each other in the mid-LR regime, giving different mid-LR/high-LR boundaries $h_\text{mid|high}$ for each optimizer.
Dividing this value by the dc gain $Q(1)$ gives the edge position in the raw learning rate coordinate $\eta_{\text{edge}} = h_{\text{mid|high}} / Q(1)$.
The columns $N_{low}$, $N_{mid}$, and $N_{high}$ are the numbers of learning rate sweep runs fall into each of the regimes.

\subsection{Individual Plots}
\label{sec:individual_plots}

We finally draw each run summarized as a single dot in the plots of the main text into full graphs.

\begin{table}[t]
\caption{\small
\textbf{Characteristic quantities of the first order optimizers used in the experiments.}
The parameters of the filters $Q(z)$ and the corresponding edge gains.
The edge gain $\Gamma_Q$ is twice the reciprocal of the first positive gain $k_\star$ for which a root of $z-1+k\bar Q(z)=0$ reaches the unit circle.
}
\label{tab:filter_parameters}
\centering
\begin{tabular}{l|rrr|rrr}
\toprule
Optimizer & $Q(1)$ & $Q'(1)$ & $\alpha_Q=\frac12-\frac{Q'(1)}{Q(1)}$ & $\theta_\star$ & $k_\star$ & $\Gamma_Q$ \\
\midrule
SGD & 1 & 0 & 0.5 & $180^\circ$ & 2 & 1 \\
SGDM50 & 2 & -2 & 1.5 & $180^\circ$ & 6 & 0.33333333 \\
SGDM90 & 10 & -90 & 9.5 & $180^\circ$ & 38 & 0.052631579 \\
SGDN50 & 2 & -1 & 1 & $180^\circ$ & 3 & 0.66666667 \\
SGDN90 & 10 & -81 & 8.6 & $180^\circ$ & 13.571429 & 0.14736842 \\
QHM01 & 1 & -8.8888889 & 9.3888889 & $180^\circ$ & 31.090909 & 0.064327485 \\
QHM05 & 1 & -8 & 8.5 & $180^\circ$ & 12.666667 & 0.15789474 \\
GF10 & 1 & -11.473713 & 11.973713 & $10^\circ$ & 0.37900746 & 5.2769409 \\
GF30 & 1 & -3.8637033 & 4.3637033 & $30^\circ$ & 1.3032254 & 1.534654 \\
GF60 & 1 & -2 & 2.5 & $60^\circ$ & 3 & 0.66666667 \\
\bottomrule
\end{tabular}
\end{table}

\begin{table}[t]
\caption{\small
\textbf{Regime boundaries of MLP experiments.}
All optimizers have the same low-LR/mid-LR boundaries $h_\text{low|mid}$, since their curves coincide into a single curve.
However, their mid-LR/high-LR boundaries $h_\text{mid|high}$ differ.
We also provide the edge position in the raw learning rate coordinate $\eta_{\text{edge}} = h_{\text{mid|high}} / Q(1)$.
Finally, we also provide the number of runs fall into each category of the regimes $N_{low}$, $N_{mid}$, and $N_{high}$ for each optimizer.
}
\label{tab:regime_boundaries}
\centering
\begin{tabular}{l|rrr|rrr}
\toprule
Optimizer & $h_{\text{low|mid}}$ & $h_{\text{mid|high}}$ & $\eta_{\text{edge}}$ & $N_{low}$ & $N_{mid}$ & $N_{high}$ \\
\midrule
SGD & 5e-05 & 0.003 & 0.003 & 16 & 11 & 6 \\
SGDM50 & 5e-05 & 0.01 & 0.005 & 15 & 13 & 5 \\
SGDM90 & 5e-05 & 0.05 & 0.005 & 10 & 18 & 5 \\
SGDN50 & 5e-05 & 0.004 & 0.002 & 15 & 11 & 6 \\
SGDN90 & 5e-05 & 0.02 & 0.002 & 10 & 16 & 5 \\
QHM01 & 5e-05 & 0.05 & 0.05 & 16 & 18 & 4 \\
QHM05 & 5e-05 & 0.014 & 0.014 & 16 & 15 & 6 \\
GF10 & 5e-05 & 0.0007 & 0.0007 & 16 & 7 & 8 \\
GF30 & 5e-05 & 0.002 & 0.002 & 16 & 10 & 8 \\
GF60 & 5e-05 & 0.005 & 0.005 & 16 & 12 & 8 \\
\bottomrule
\end{tabular}
\end{table}

\begin{figure}[p]
    \centering
    \includegraphics[width=\textwidth]{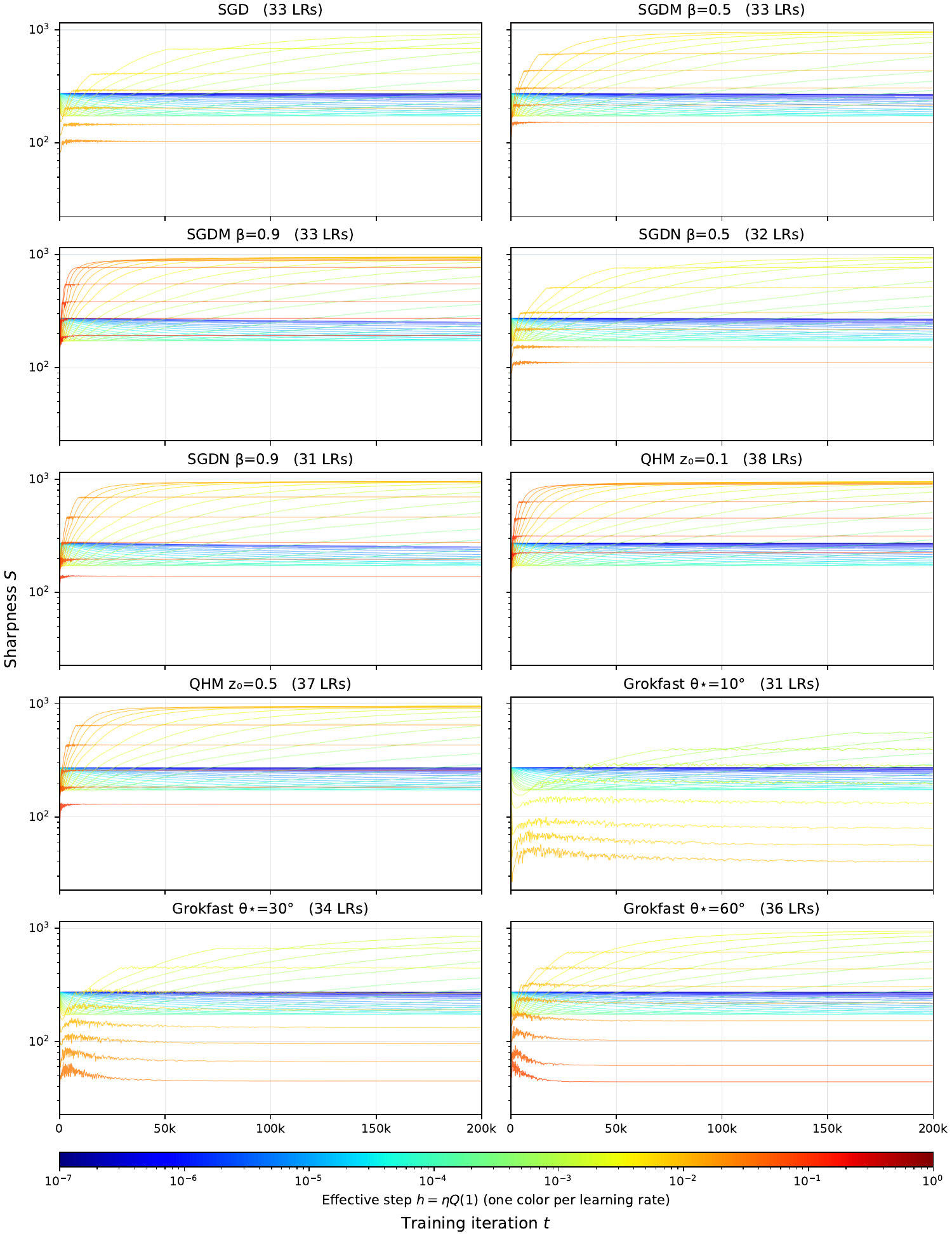}
    \caption{\small \textbf{Individual plots of MLP sharpness $S$.} Plots are drawn with respect to the training iteration $t$ coordinate. Different learning rates are colored in different colors.}
    \label{fig:mlp-iteration-sharpness-grid}
\end{figure}

\begin{figure}[p]
    \centering
    \includegraphics[width=\textwidth]{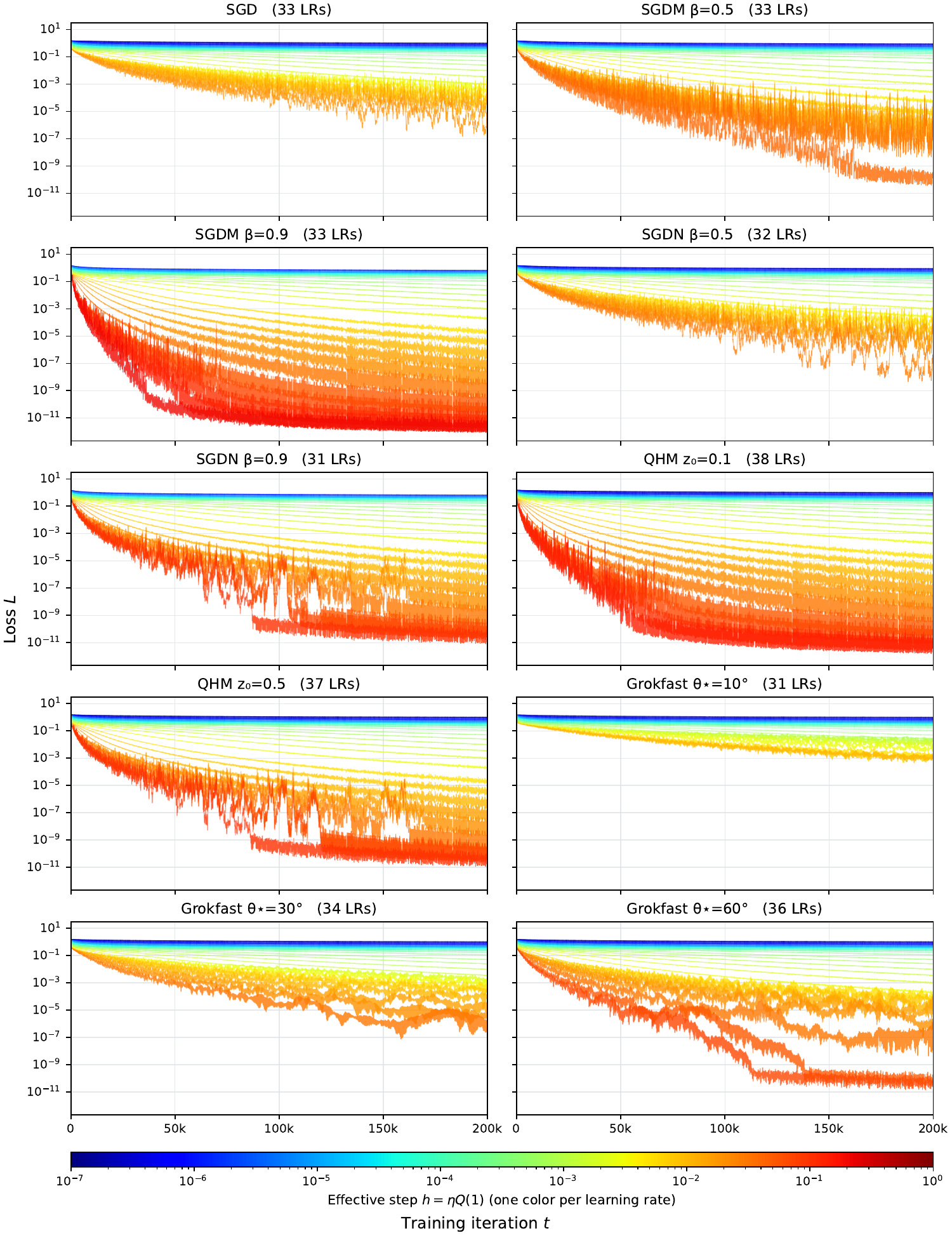}
    \caption{\small \textbf{Individual plots of MLP loss $L$.} Plots are drawn with respect to the training iteration $t$ coordinate. Different learning rates are colored in different colors.}
    \label{fig:mlp-iteration-loss-grid}
\end{figure}

\begin{figure}[p]
    \centering
    \includegraphics[width=\textwidth]{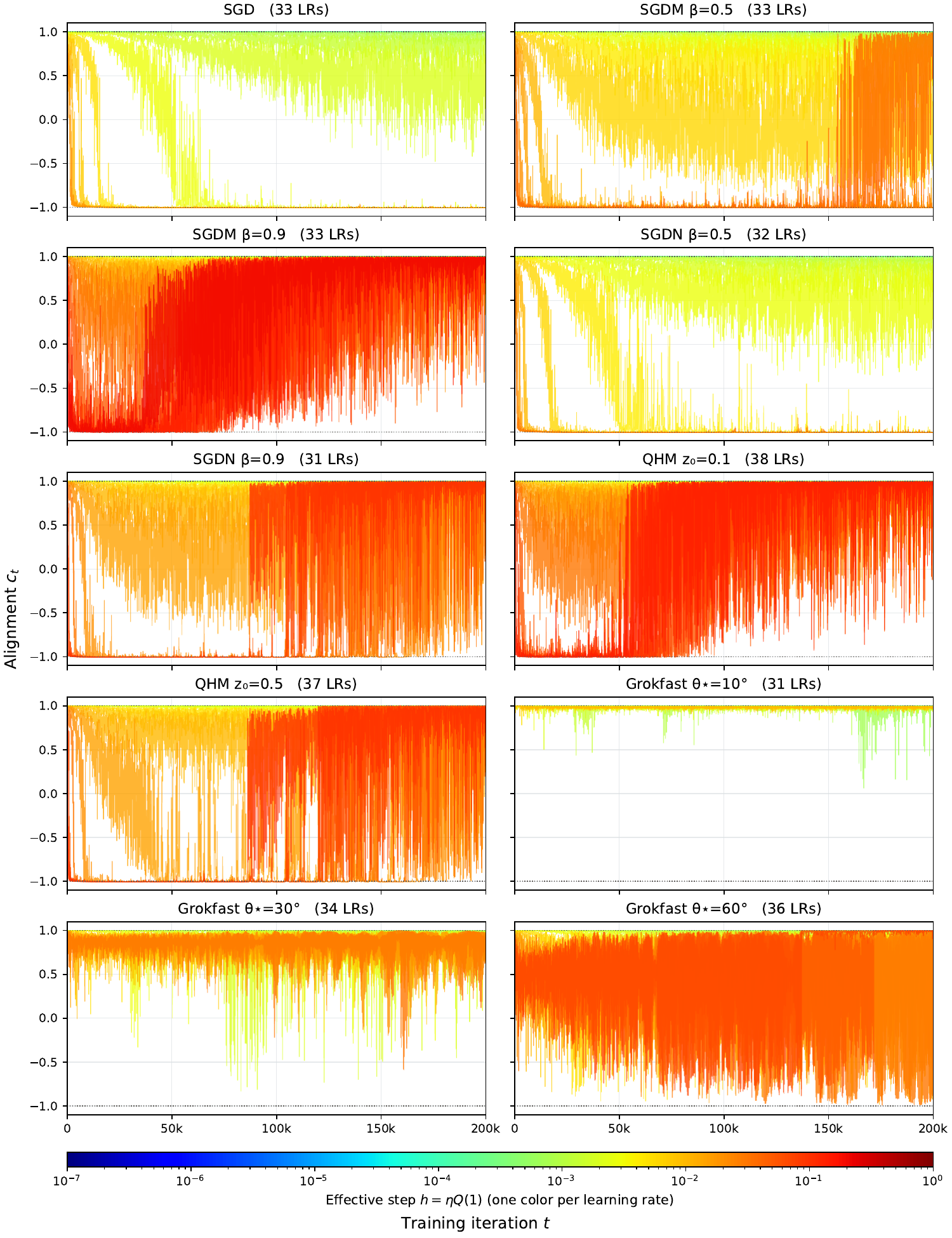}
    \caption{\small \textbf{Individual plots of MLP alignment $c_t$.} Plots are drawn with respect to the training iteration $t$ coordinate. Different learning rates are colored in different colors.}
    \label{fig:mlp-iteration-alignment-grid}
\end{figure}

\begin{figure}[p]
    \centering
    \includegraphics[width=\textwidth]{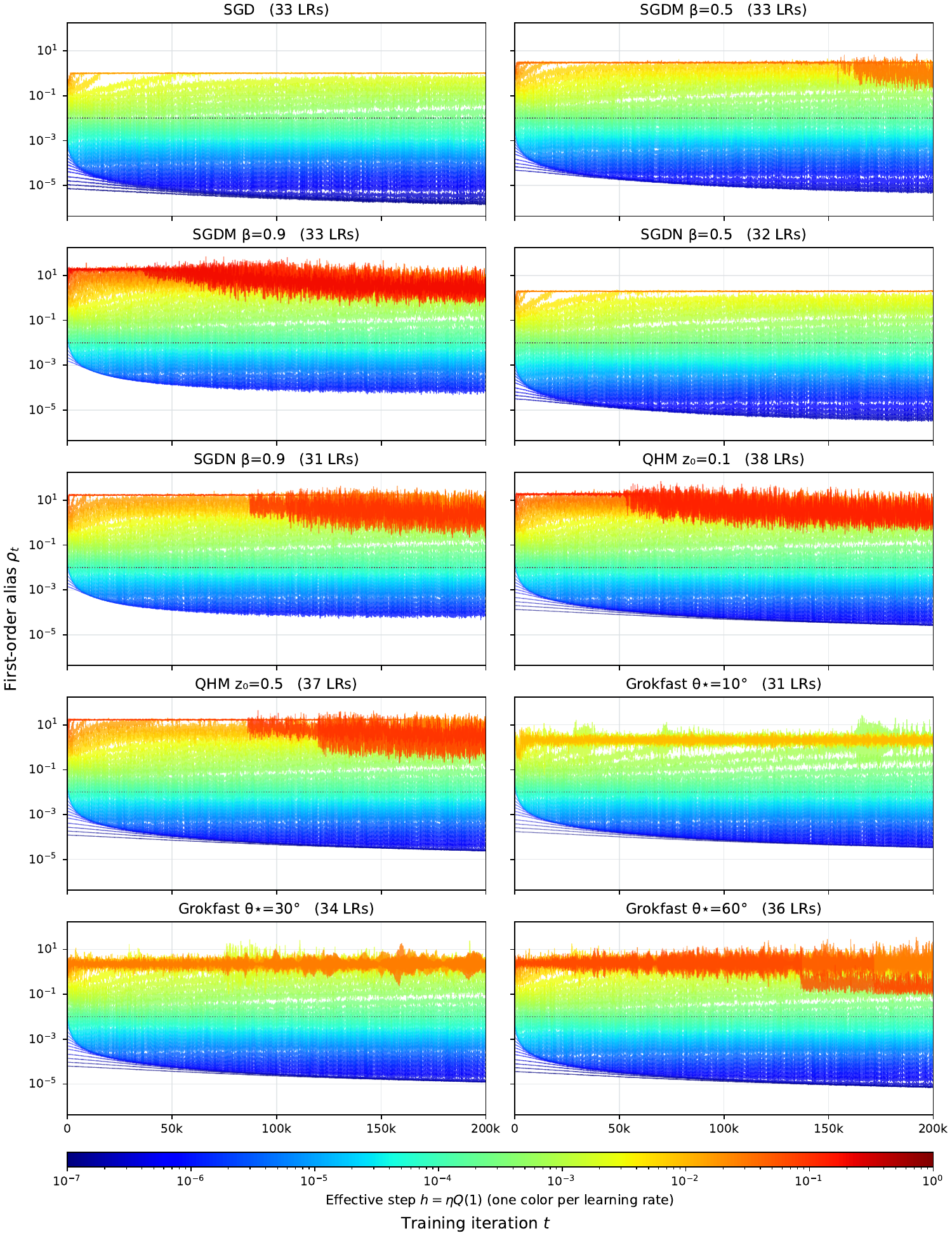}
    \caption{\small \textbf{Individual plots of MLP aliasing $\rho_t$.} Plots are drawn with respect to the training iteration $t$ coordinate. Different learning rates are colored in different colors.}
    \label{fig:mlp-iteration-alias-grid}
\end{figure}

\begin{figure}[p]
    \centering
    \includegraphics[width=\textwidth]{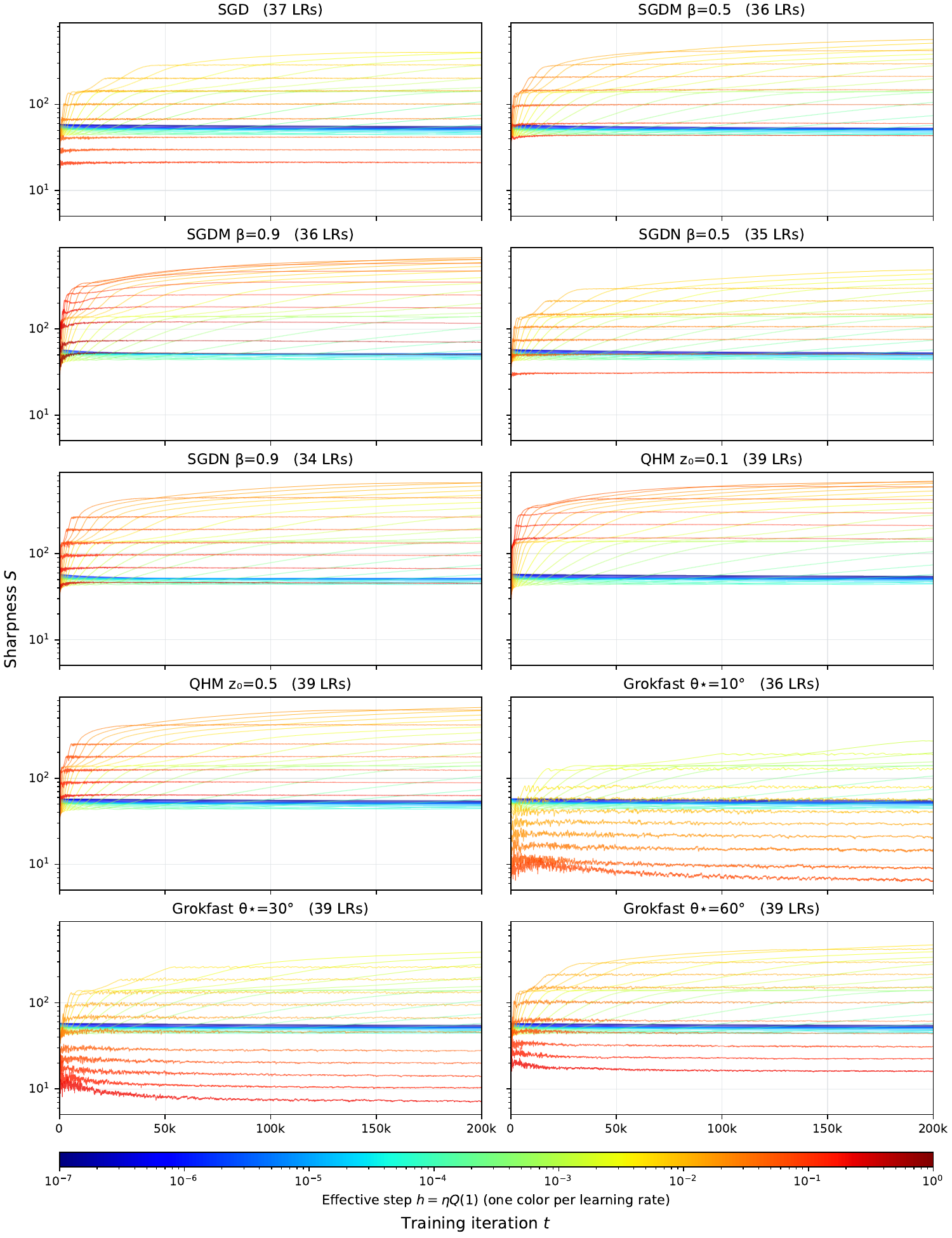}
    \caption{\small \textbf{Individual plots of CNN sharpness $S$.} Plots are drawn with respect to the training iteration $t$ coordinate. Different learning rates are colored in different colors.}
    \label{fig:cnn-iteration-sharpness-grid}
\end{figure}

\begin{figure}[p]
    \centering
    \includegraphics[width=\textwidth]{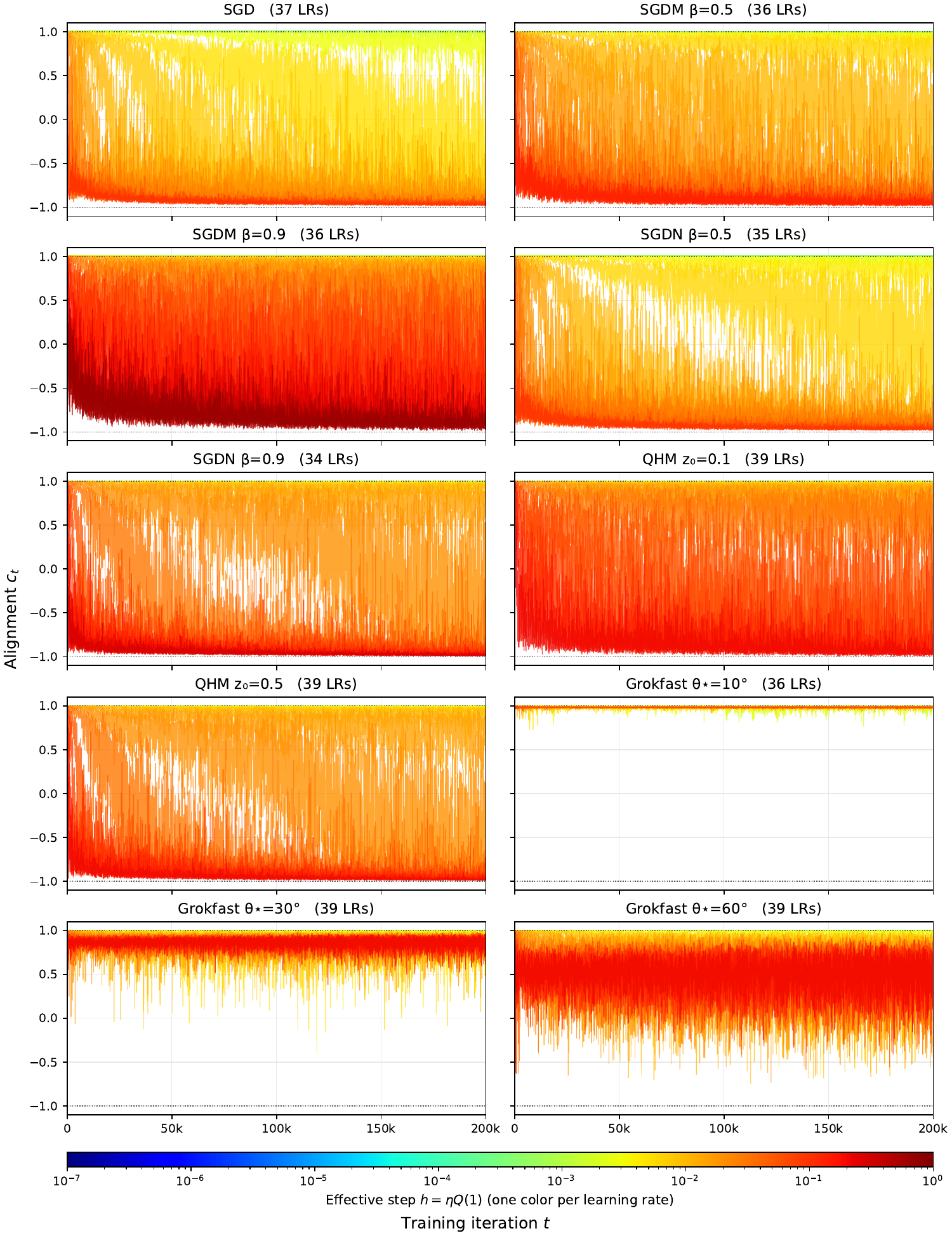}
    \caption{\small \textbf{Individual plots of CNN alignment $c_t$.} Plots are drawn with respect to the training iteration $t$ coordinate. Different learning rates are colored in different colors.}
    \label{fig:cnn-iteration-alignment-grid}
\end{figure}

\begin{figure}[p]
    \centering
    \includegraphics[width=\textwidth]{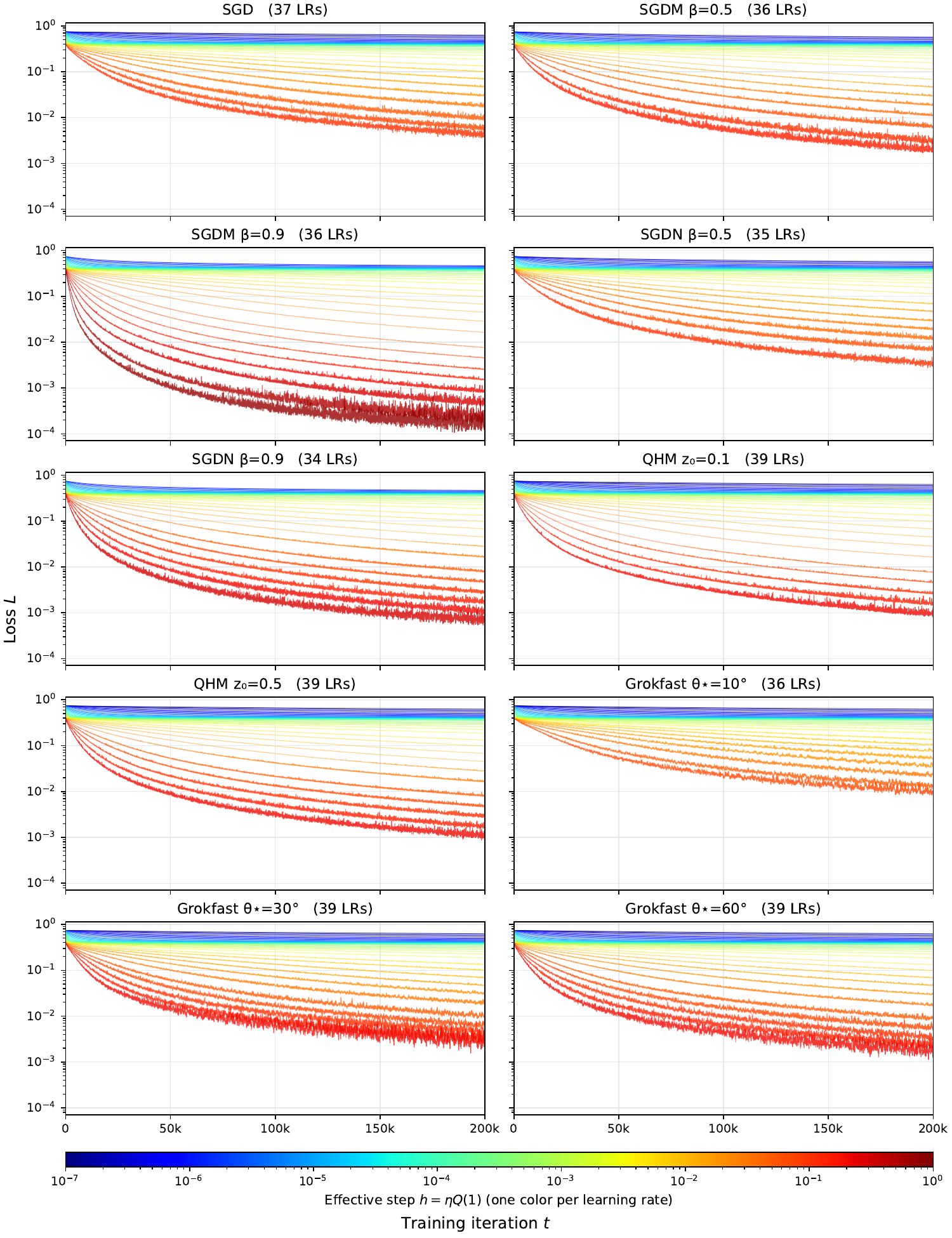}
    \caption{\small \textbf{Individual plots of CNN loss $L$.} Plots are drawn with respect to the training iteration $t$ coordinate. Different learning rates are colored in different colors.}
    \label{fig:cnn-iteration-loss-grid}
\end{figure}

\begin{figure}[p]
    \centering
    \includegraphics[width=\textwidth]{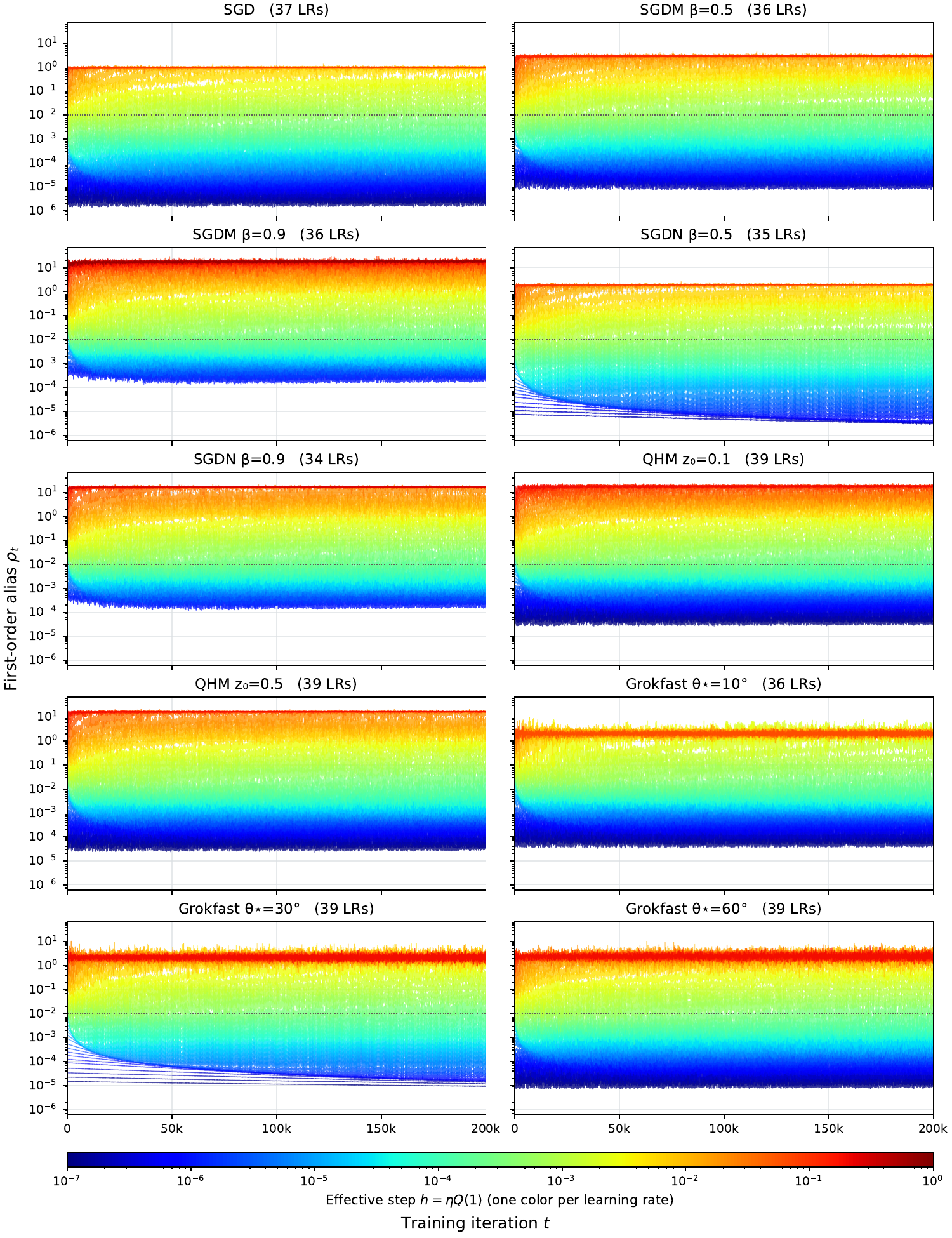}
    \caption{\small \textbf{Individual plots of CNN aliasing $\rho_t$.} Plots are drawn with respect to the training iteration $t$ coordinate. Different learning rates are colored in different colors.}
    \label{fig:cnn-iteration-alias-grid}
\end{figure}

\begin{figure}[p]
    \centering
    \includegraphics[width=\textwidth]{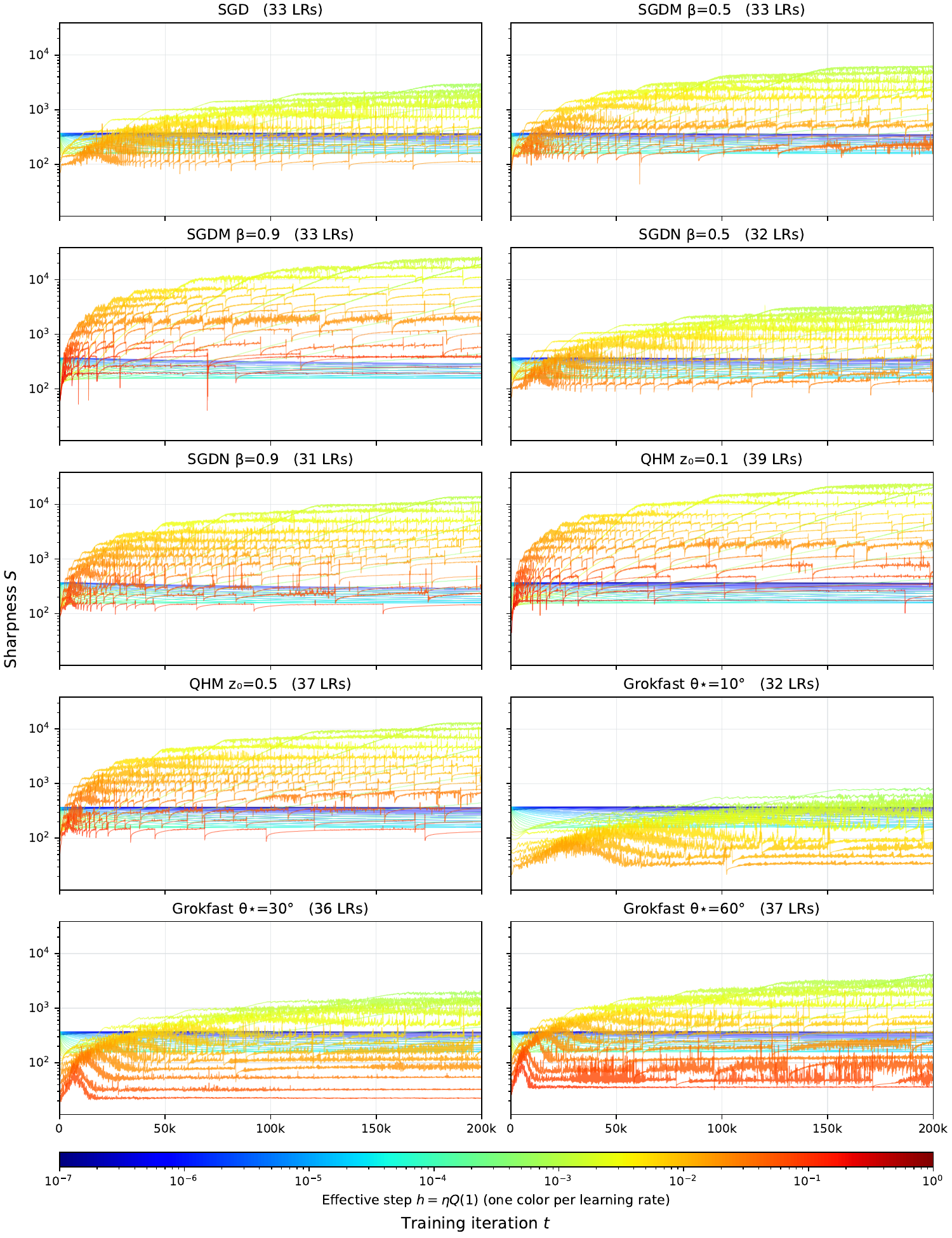}
    \caption{\small \textbf{Individual plots of ViT sharpness $S$.} Plots are drawn with respect to the training iteration $t$ coordinate. Different learning rates are colored in different colors.}
    \label{fig:vit-iteration-sharpness-grid}
\end{figure}

\begin{figure}[p]
    \centering
    \includegraphics[width=\textwidth]{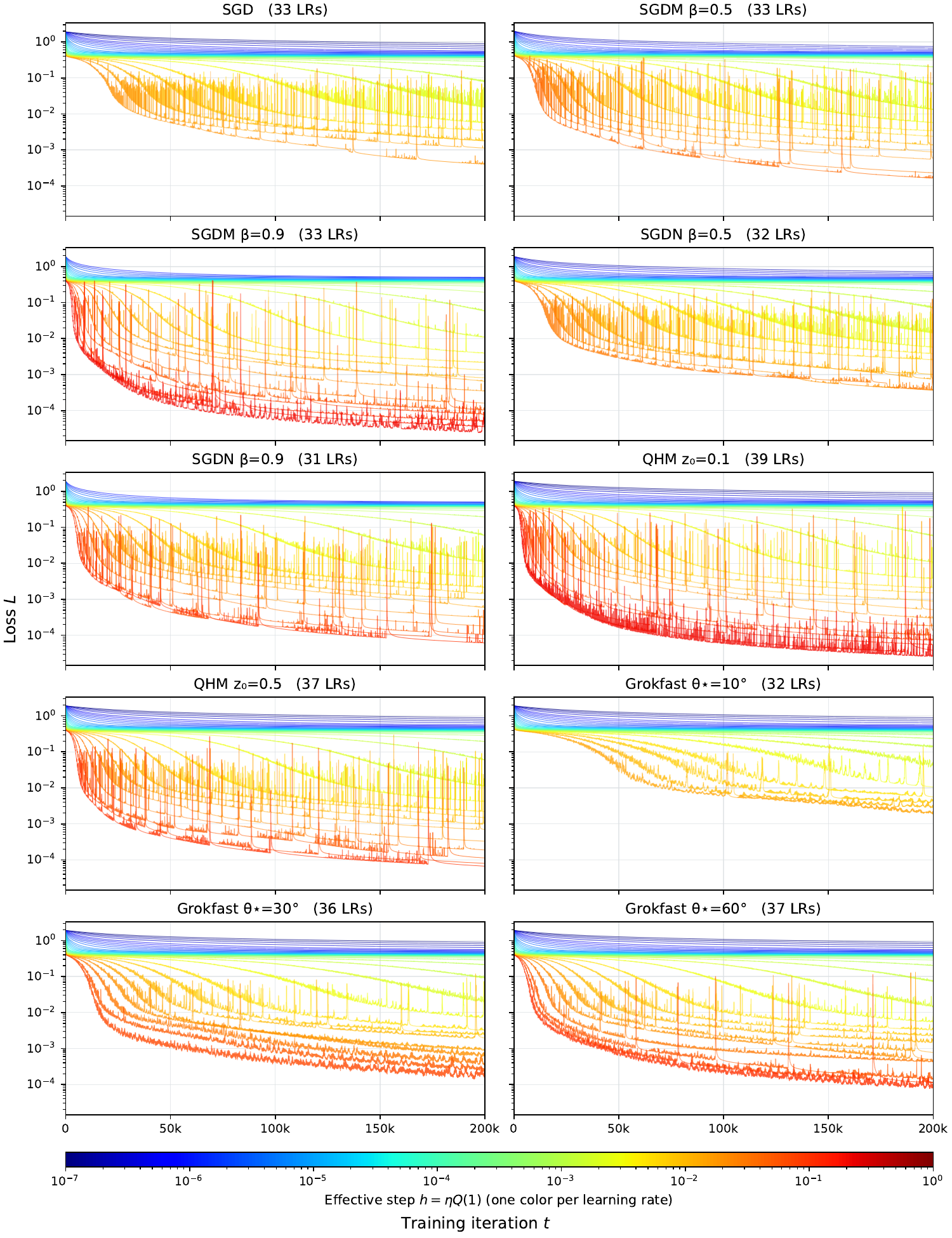}
    \caption{\small \textbf{Individual plots of ViT loss $L$.} Plots are drawn with respect to the training iteration $t$ coordinate. Different learning rates are colored in different colors.}
    \label{fig:vit-iteration-loss-grid}
\end{figure}

\begin{figure}[p]
    \centering
    \includegraphics[width=\textwidth]{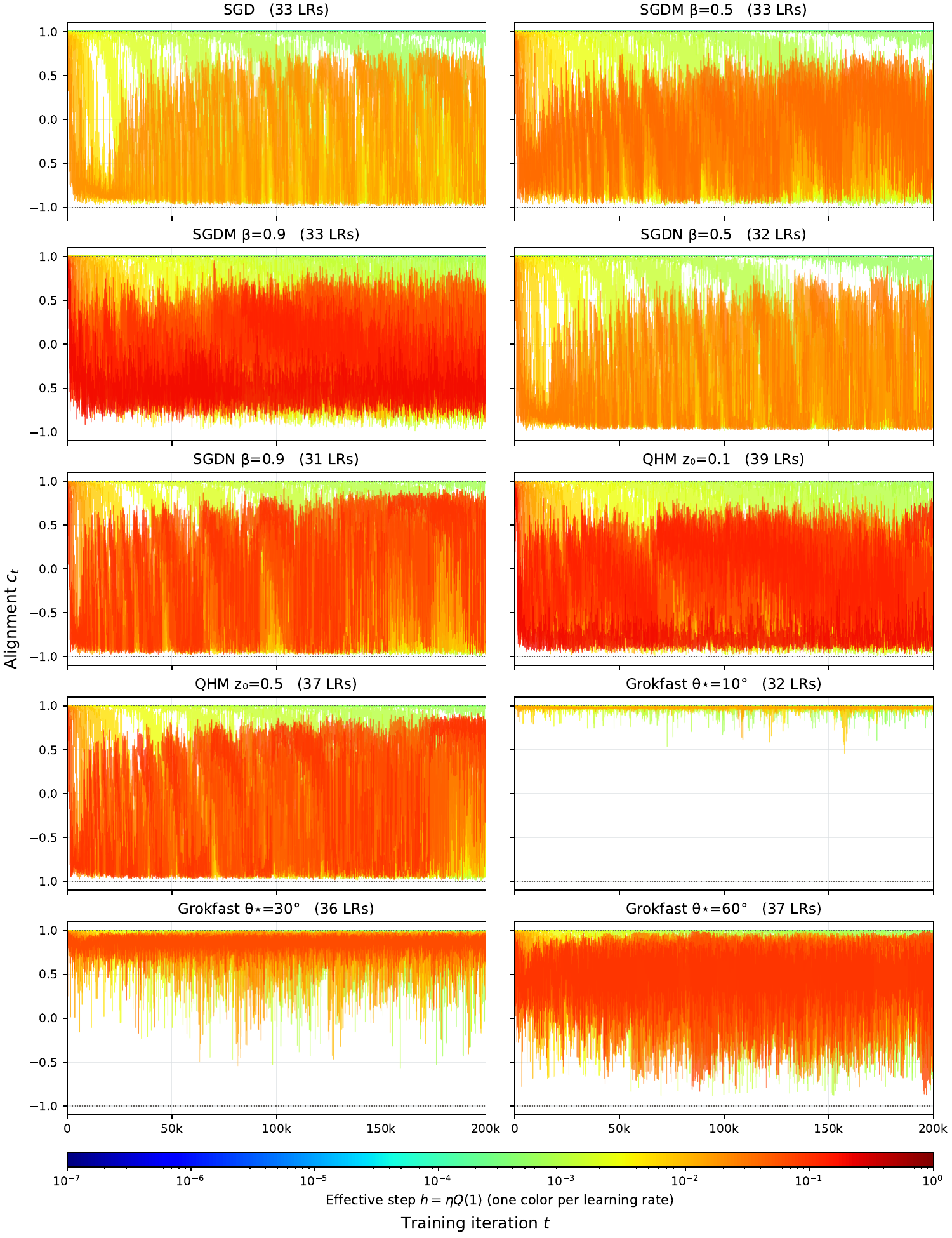}
    \caption{\small \textbf{Individual plots of ViT alignment $c_t$.} Plots are drawn with respect to the training iteration $t$ coordinate. Different learning rates are colored in different colors.}
    \label{fig:vit-iteration-alignment-grid}
\end{figure}

\begin{figure}[p]
    \centering
    \includegraphics[width=\textwidth]{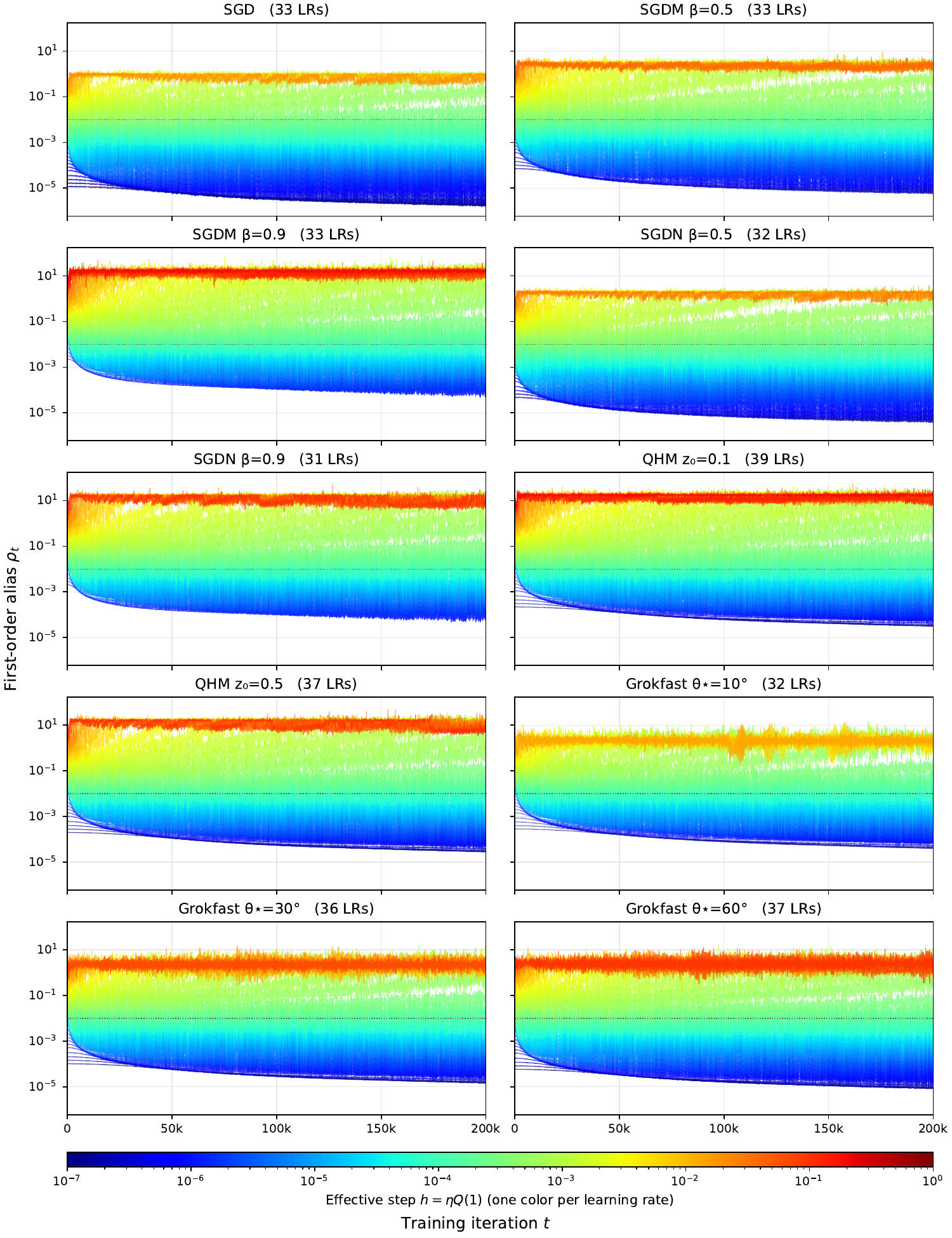}
    \caption{\small \textbf{Individual plots of ViT aliasing $\rho_t$.} Plots are drawn with respect to the training iteration $t$ coordinate. Different learning rates are colored in different colors.}
    \label{fig:vit-iteration-alias-grid}
\end{figure}

\label{sec:exp}

\end{document}